\documentclass[11pt]{article}
\usepackage[T1]{fontenc}
\usepackage{lmodern}
\DeclareTextFontCommand{\textsc}{\normalfont\scshape}
\newcommand{\arxiv}[1]{#1}
\newcommand{\iclr}[1]{}
\PassOptionsToPackage{table,dvipsnames}{xcolor}

\usepackage{amsmath,amsfonts,bm}

\def\eqref#1{equation~\ref{#1}}

\def\1{\bm{1}}

\def\eps{{\epsilon}}

\DeclareMathAlphabet{\mathsfit}{\encodingdefault}{\sfdefault}{m}{sl}
\SetMathAlphabet{\mathsfit}{bold}{\encodingdefault}{\sfdefault}{bx}{n}

\arxiv{
\usepackage[a4paper, margin=2.5cm]{geometry}
\PassOptionsToPackage{hypertexnames=false}{hyperref}  
\usepackage[dvipsnames]{xcolor}
\usepackage[colorlinks]{hyperref}
\hypersetup{
    citecolor=[RGB]{50,100,170},
    linkcolor=[RGB]{50,100,170},
    urlcolor=[RGB]{255,102,178}}
\usepackage{fancyhdr}
}

\arxiv{
\newcommand{\toptitlebar}{%
  {\color{black}\hrule height 1pt}%
  \vskip 0.25in%
}
}

\makeatletter
\arxiv{
\renewcommand{\maketitle}{%
  \begingroup
  \renewcommand{\thefootnote}{}
  \begin{center}%
    \makebox[\textwidth][l]{\includegraphics[width=0.28\textwidth]{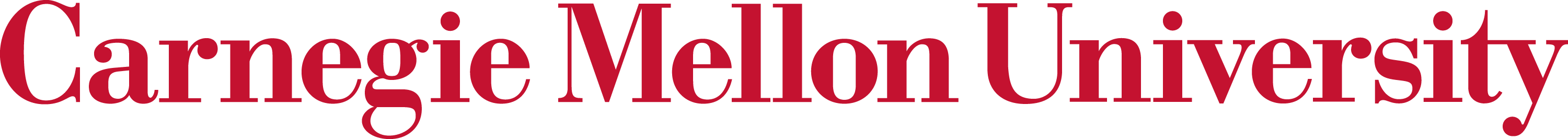}}\par
    \vskip 0.1in%
    \toptitlebar
    \vskip 0.1in%
    {\LARGE\bfseries \@title \par}%
    \vskip 0.3in%
    {\normalsize \@author \par}%
  \end{center}%
  \@thanks
  \endgroup
  \setcounter{footnote}{0}%
  \global\let\@thanks\@empty
  \par
  \vskip 0.3in%
}

\renewcommand\section{\@startsection {section}{1}{\z@}{-2.0ex plus
    -0.5ex minus -.2ex}{1.5ex plus 0.3ex minus .2ex}{\large\bfseries\raggedright}}
\renewcommand\subsection{\@startsection{subsection}{2}{\z@}{-1.8ex plus
    -0.5ex minus -.2ex}{0.8ex plus .2ex}{\normalsize\bfseries\raggedright}}
\renewcommand\subsubsection{\@startsection{subsubsection}{3}{\z@}{-1.5ex plus
   -0.5ex minus -.2ex}{0.5ex plus .2ex}{\normalsize\bfseries\raggedright}}

\renewenvironment{abstract}%
  {\centerline{\large\bfseries Abstract}%
   \begin{list}{}%
      {\setlength{\rightmargin}{0.6cm}%
       \setlength{\leftmargin}{0.6cm}}%
    \item[]\ignorespaces}%
  {\unskip\end{list}}

\setlength{\parindent}{2em}
\setlength{\parskip}{0.4em}
}
\makeatother

\iclr{
\usepackage[colorlinks=true, linkcolor=blue!70!black, citecolor=blue!70!black,urlcolor=blue!70!black,breaklinks=true]{hyperref}
\PassOptionsToPackage{dvipsnames}{xcolor} 
}

\usepackage{microtype}
\usepackage{hhline}

\makeatletter
\newcommand{\neutralize}[1]{\expandafter\let\csname c@#1\endcsname\count@}
\makeatother

\usepackage{algorithm}

\arxiv{
\usepackage{natbib}
\bibliographystyle{plainnat}
\bibpunct{(}{)}{;}{a}{,}{,}
}

\usepackage{amsthm}
\usepackage{mathtools}
\usepackage{amsmath}
\usepackage{bbm}
\usepackage{amsfonts}
\usepackage{amssymb}

\usepackage{xpatch}
\usepackage[nameinlink,capitalize]{cleveref}

\theoremstyle{plain}
\newtheorem{theorem}{Theorem}
\newtheorem{lemma}[theorem]{Lemma}
\newtheorem{assumption}[theorem]{Assumption}
\newtheorem{condition}[theorem]{Condition}

\AtBeginEnvironment{example}{\pushQED{\qed}}
\AtEndEnvironment{example}{\popQED}
\newtheorem{proposition}[theorem]{Proposition}

\theoremstyle{definition}

\AtBeginEnvironment{lemma}{\crefalias{theorem}{lemma}}
\AtBeginEnvironment{assumption}{\crefalias{theorem}{assumption}}
\AtBeginEnvironment{condition}{\crefalias{theorem}{condition}}
\AtBeginEnvironment{example}{\crefalias{theorem}{example}}
\AtBeginEnvironment{proposition}{\crefalias{theorem}{proposition}}
\AtBeginEnvironment{fact}{\crefalias{theorem}{fact}}
\AtBeginEnvironment{remark}{\crefalias{theorem}{remark}}

\theoremstyle{definition}  

\newtheorem{corollary}{Corollary}[section]

\theoremstyle{plain}
\newtheorem{definition}{Definition}[section]

\xpatchcmd{\proof}{\itshape}{\normalfont\proofnameformat}{}{}
\newcommand{\proofnameformat}{\bfseries}

\renewcommand{\eqref}[1]{\texorpdfstring{\hyperref[#1]{(\ref*{#1})}}{(\ref*{#1})}}

\crefformat{equation}{#2Eq. (#1)#3}
\Crefformat{equation}{#2Eq. (#1)#3}

\Crefformat{figure}{#2Figure #1#3}
\Crefname{assumption}{Assumption}{Assumptions}
\Crefformat{assumption}{#2Assumption #1#3}
\Crefname{subsubsection}{Section}{Sections}
\crefformat{subsubsection}{#2Section #1#3}
\Crefformat{subsubsection}{#2Section #1#3}
\Crefname{alg}{Alg.}{Algs.}

\usepackage{crossreftools}
\usepackage{xparse}

\ExplSyntaxOn
\DeclareDocumentCommand{\XDeclarePairedDelimiter}{mm}
 {
  \__egreg_delimiter_clear_keys: 
  \keys_set:nn { egreg/delimiters } { #2 }
  \use:x 
   {
    \exp_not:n {\NewDocumentCommand{#1}{sO{}m} }
     {
      \exp_not:n { \IfBooleanTF{##1} }
       {
        \exp_not:N \egreg_paired_delimiter_expand:nnnn
         { \exp_not:V \l_egreg_delimiter_left_tl }
         { \exp_not:V \l_egreg_delimiter_right_tl }
         { \exp_not:n { ##3 } }
         { \exp_not:V \l_egreg_delimiter_subscript_tl }
       }
       {
        \exp_not:N \egreg_paired_delimiter_fixed:nnnnn 
         { \exp_not:n { ##2 } }
         { \exp_not:V \l_egreg_delimiter_left_tl }
         { \exp_not:V \l_egreg_delimiter_right_tl }
         { \exp_not:n { ##3 } }
         { \exp_not:V \l_egreg_delimiter_subscript_tl }
       }
     }
   }
 }

\keys_define:nn { egreg/delimiters }
 {
  left      .tl_set:N = \l_egreg_delimiter_left_tl,
  right     .tl_set:N = \l_egreg_delimiter_right_tl,
  subscript .tl_set:N = \l_egreg_delimiter_subscript_tl,
 }

\cs_new_protected:Npn \__egreg_delimiter_clear_keys:
 {
  \keys_set:nn { egreg/delimiters } { left=.,right=.,subscript={} }
 }

\cs_new_protected:Npn \egreg_paired_delimiter_expand:nnnn #1 #2 #3 #4
 {
  \mathopen{}
  \mathclose\c_group_begin_token
   \left#1
   #3
   \group_insert_after:N \c_group_end_token
   \right#2
   \tl_if_empty:nF {#4} { \c_math_subscript_token {#4} }
 }
\cs_new_protected:Npn \egreg_paired_delimiter_fixed:nnnnn #1 #2 #3 #4 #5
 {
  \mathopen{#1#2}#4\mathclose{#1#3}
  \tl_if_empty:nF {#5} { \c_math_subscript_token {#5} }
 }
\ExplSyntaxOff

\XDeclarePairedDelimiter{\supnorm}{
  left=\lVert,
  right=\rVert,
  subscript=\infty
  }

\usepackage{url}
\usepackage{algorithmicx}
\usepackage{algpseudocode}
\usepackage{float}
\usepackage{graphicx}
\usepackage{wrapfig}
\usepackage{needspace}
\usepackage{enumitem}
\usepackage[most]{tcolorbox}
\usepackage{tabularx}
\usepackage{booktabs}
\usepackage{array}

\newtcolorbox{abstbox}[1]{%
  enhanced, breakable, colback=black!4, colframe=black, coltitle=white,
  colbacktitle=black, fonttitle=\bfseries\small, title={#1},
  fontupper=\footnotesize, arc=1.2mm, boxrule=0.9pt, left=3mm, right=3mm, top=2mm, bottom=2mm,
  before skip=8pt, after skip=8pt}
\newtcblisting{minesweeperpromptbox}[1]{%
  enhanced, breakable, colback=black!4, colframe=black, coltitle=white,
  colbacktitle=black, fonttitle=\bfseries\small, title={#1},
  arc=1.2mm, boxrule=0.9pt, left=3mm, right=3mm, top=2mm, bottom=2mm,
  before skip=8pt, after skip=8pt, listing only, listing engine=listings,
  listing options={basicstyle=\ttfamily\fontsize{7.5}{8.5}\selectfont,
    breaklines=true,
    breakatwhitespace=true, columns=fullflexible, keepspaces=true,
    showstringspaces=false}}
\newcommand{\abstelide}[1]{\par\smallskip\centerline{\textit{\footnotesize [\,#1 \,]}}\par\smallskip}

\definecolor{propblue}{RGB}{70,105,145}
\definecolor{propbluebg}{RGB}{245,248,252}
\definecolor{takeawayframe}{RGB}{164,127,62}
\definecolor{takeawaytitle}{RGB}{245,233,207}
\definecolor{takeawaybg}{RGB}{253,250,243}

\newtcbtheorem{propositionbox}{Proposition}%
{
    colback=takeawaybg,
    colframe=takeawayframe,
    colbacktitle=takeawaytitle,
    coltitle=black!85,
    fonttitle=\bfseries,
    boxrule=0.6pt,
    arc=2pt,
    left=6pt,
    right=6pt,
    top=6pt,
    bottom=6pt
}{prop}

\newif\ifdraftnotes
\draftnotestrue

\newtheorem*{proposition*}{Proposition}
\theoremstyle{definition}
\theoremstyle{plain}

\title{Training Language Models to Fail Gracefully\thanks{Project website, code, and other assets: \url{https://zanette-labs.github.io/failing-gracefully/}}}

\author{%
  \mbox{\textbf{Daman Arora}\textsuperscript{1}\thanks{Correspondence:
  \texttt{\{damana,azanette\}@andrew.cmu.edu}}}\quad
  \mbox{\textbf{Xuecheng Liu}\textsuperscript{1}}\quad
  \mbox{\textbf{Himanshu Gaurav Singh}\textsuperscript{2}}\\
  \mbox{\textbf{Fahim Tajwar}\textsuperscript{1}}\quad
  \mbox{\textbf{Abitha Thankaraj}\textsuperscript{1}}\quad
  \mbox{\textbf{Ruslan Salakhutdinov}\textsuperscript{1}}\quad\\
  \mbox{\textbf{Haiwen Feng}\textsuperscript{2}}\quad
  \mbox{\textbf{Yuda Song}\textsuperscript{1}}\quad
  \mbox{\textbf{Andrea Zanette}\textsuperscript{1}}\\[0.5em]
  \textsuperscript{1}Carnegie Mellon University\quad
  \textsuperscript{2}UC Berkeley%
}
\hypersetup{%
  pdftitle={Training Language Models to Fail Gracefully},
  pdfauthor={Daman Arora, Xuecheng Liu, Himanshu Gaurav Singh, Fahim Tajwar, Abitha Thankaraj, Ruslan Salakhutdinov, Haiwen Feng, Yuda Song, Andrea Zanette}%
}
\date{}
\begin{document}

\maketitle

\begin{abstract}

Reinforcement learning makes language models better at solving tasks, but the same mechanism also rewards guessing. When success is the only way to earn reward, opting for a safe fallback is always worse than a long-shot attempt. When a wrong action can corrupt a database or ship broken code, that incentive can be costly. Therefore, we propose training models to \textit{fail gracefully}: to preserve successful behavior whenever possible while learning to execute a safe fallback policy when acting would be unreliable. However, naively using policy gradients with intermediate reward for fallback behavior is brittle due to unfavorable optimization dynamics. We address this challenge with \textsc{GracefulRL}, which reshapes the optimization landscape to promote graceful failure. Importantly, this behavior emerges directly from outcome-level reward signals, without annotations specifying when to fall back or additional inference-time machinery such as a verifier. Across a range of tasks and scales, we train models that largely preserve their task-solving capability while selectively falling back when acting would be unreliable. More broadly, our results show that reinforcement learning can not only incentivize models to succeed, but also to reason about the limits of their capabilities and change course when they determine that success is out of reach.

\end{abstract}

\section{Introduction}
Large Language Models (LLMs) can now complete in minutes tasks that would take a person hours. They write and refactor code across whole repositories, produce long mathematical arguments, and act as agents that edit files and run commands on our behalf.  Significant improvements in such capabilities can be attributed to Reinforcement learning (RL) in domains with verifiable rewards such as coding and mathematics, ~\citep{Guo_2025,o1card,shao2024deepseekmathpushinglimitsmathematical} where models are rewarded for successful task completion \citep{lambert2025tulu,deepcoder2025}. 

These improvements are often measured by metrics such as accuracy or success rate, but these metrics say little about how the model \emph{behaves when it does not succeed}. Models are imperfectly informed and computationally bounded during inference. Therefore, there always exist situations in which appropriate actions cannot be reliably determined. In such cases, reinforcement learning can incentivize guessing, as a strategy to maximize reward \citep{kalai2025languagemodelshallucinate}, laying the burden of verification on the user. Someone needs to check the proof, review the pull request, and confirm what the agent actually did. This verification can be tedious and prone to errors, and when it fails, mistakes can propagate into consequential actions.

Therefore, we want models that change course when success is unlikely. If they can solve a task reliably, they should. If they cannot, they should do something else: revert their changes, inform the user that they are stuck, or hand the task to a different model. We call this behaviour \textit{failing gracefully}. Not only does graceful failure avoid harmful actions, it also helps build trust with users.

The naive way to train for this behavior is to reward success most, fallback second, and failure least. In principle, this should teach the model to keep trying when success is within reach and to fallback when it is not. Surprisingly, we find that learning this behavior with standard policy-gradient methods \citep{williams1992mlj-simple} and their derivatives \citep{shao2024deepseekmathpushinglimitsmathematical} can be brittle. In the worst case, they can collapse to a model that always falls back and stops solving the task altogether (Section \ref{sec:collapse}). We trace this failure to \textit{unfavorable optimization dynamics}: the objective expresses the behavior we want, but the optimization process does not reliably produce it.

We introduce \textsc{GracefulRL}, a reinforcement learning approach designed to avoid this pathology. We modify the optimization landscape which lets us maintain pressure to solve the task, while simultaneously encouraging the model to fall back rather than making an error when success is unlikely. Crucially, the model learns when to fall back from its own interaction with the environment with no labels specifying which problems are too hard, and no separate uncertainty model or post-hoc decision rule at inference time.

Across navigation, mathematical reasoning, and multi-turn Minesweeper, \textsc{GracefulRL} learns selective fallback while preserving strong task performance. On AIME, it reduces the error rate from 79.1\% to 8.9\% while sacrificing only 3.1 percentage points of success relative to standard RL. In Minesweeper, it improves both sides of the tradeoff, increasing success while reducing errors. Perhaps most strikingly, we find that the model verbalizes its decision making strategy in its chain-of-thought when its about to change course. Overall, our results suggest that reinforcement learning can not only improve a model's task performance, but also shape it's behavior when success is unlikely.

\begin{figure*}[t]
    \centering
    \includegraphics[width=\textwidth]{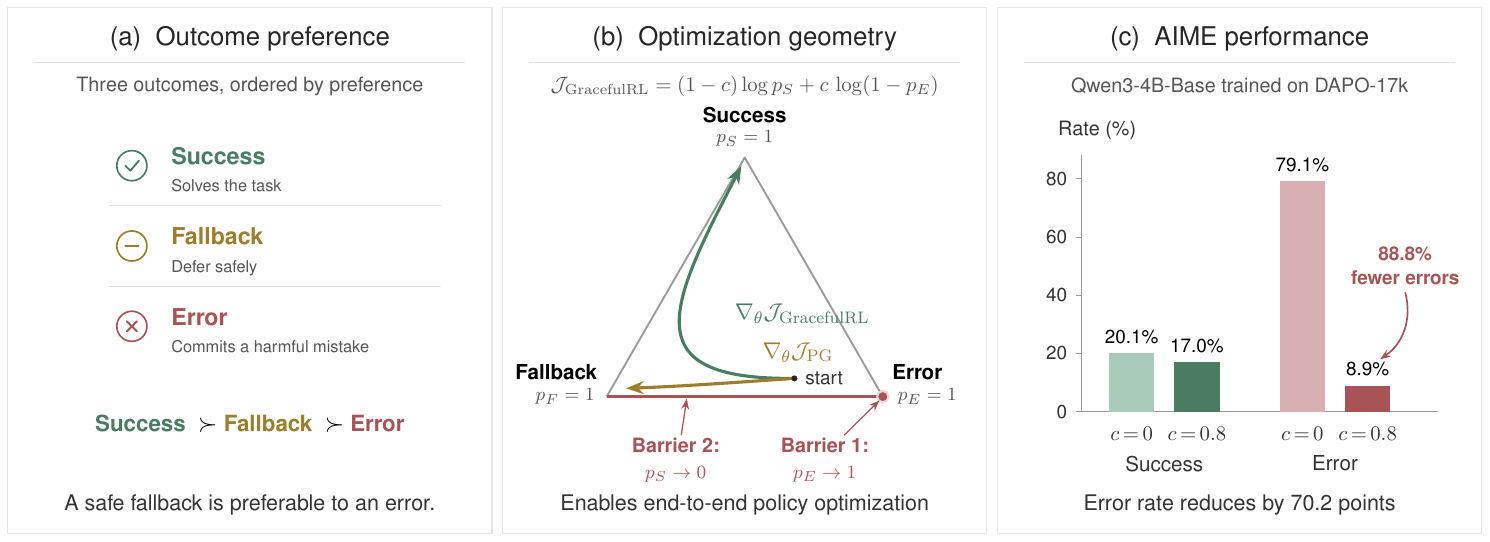}
    \caption{\textbf{Learning to fail gracefully.}
    \textbf{(a)} \textsc{GracefulRL} augments standard RLVR with an explicit fallback behavior, ordered as $\textsc{Success}\succ\textsc{Fallback}\succ\textsc{Error}$.
    \textbf{(b)} Standard policy gradient (PG) can lose optimization signal near saturated policies. In contrast, the \textsc{GracefulRL} objective retains a non-vanishing gradient. \textbf{(c)} Using models trained with \textsc{GracefulRL} on the AIME benchmark, the most cautious model ($c=0.8$) reduces the error rate from $79.1\%$ to $8.9\%$, an $88.8\%$ relative reduction, while success only drops by 3.1\% compared to RL with binary rewards ($c=0$).}
    \label{fig:main-teaser}
\end{figure*}

\section{Problem Definition and Setup}
\label{sec:setup}

\textbf{Notation.} Let $x\sim\mathcal D$ denote a task. A policy $\pi_\theta$, parameterized by
$\theta$, generates a sequence of actions
$y=(a_1,\ldots,a_T)\sim\pi_\theta(\cdot\mid x)$, where $T \leq H$ for some horizon $H$. The sequence may consist of language-model tokens or actions in an interactive environment. In standard RLVR, a verifier/reward function $f$ classifies each generated sequence as either a success or an error, i.e.,  $f(x,y)\in\{\textsc{Success},\textsc{Error}\}$. For a fixed task $x$, let
$p_S(x)=\mathbb{P}_{y\sim\pi_\theta(\cdot\mid x)}[f(x,y)=\textsc{Success}]$.
Standard RLVR seeks policy parameters that maximize the average success probability over the task distribution, i.e.,
$\theta^\star\in\arg\max_\theta
\mathbb E_{x\sim\mathcal D}[p_S(x)]$.
For this, many different algorithms have been proposed \citep{schulman2017ppo,shao2024deepseekmathpushinglimitsmathematical,ahmadian-etal-2024-back,tajwar2026maximum}.

\textbf{Success, Fallback, and Error.} In \textsc{GracefulRL}, we augment the outcome space for model trajectories with an explicit fallback option, such that $f(x,y)\in\{\textsc{Success},\textsc{Error},\textsc{Fallback}\}$. The fallback outcome represents a more favorable option compared to failing. It's semantics would be environment specific: it may correspond to abstaining on a reasoning problem, restoring a database to a previous state, or handing off a task to a different model/sub-agent. We encode a preference ordering
$\textsc{Success}\succ\textsc{Fallback}\succ\textsc{Error}$ using the following ternary reward, where $c$ controls how strongly errors are avoided relative to successes:

\begin{equation*}
    r(x,y)=
    \begin{cases}
        1, & f(x,y)=\textsc{Success},\\
        c, & f(x,y)=\textsc{Fallback},\\
        0, & f(x,y)=\textsc{Error},
    \end{cases}
    \qquad 0<c<1.
\end{equation*} 
\textbf{Evaluating Graceful Failure.} In order to evaluate the performance of a model we define success, fallback and error rates. Concretely, given a test distribution, $\mathcal D_{\mathrm{test}}$,
we define:
\begin{align*}
    \operatorname{Success@}1
    &=\mathbb{P}_{x\sim\mathcal D_{\mathrm{test}},\,
    y\sim\pi_\theta(\cdot\mid x)}[f(x,y)=\textsc{Success}],\\
    \operatorname{Error@}1
    &=\mathbb{P}_{x\sim\mathcal D_{\mathrm{test}},\,
    y\sim\pi_\theta(\cdot\mid x)}[f(x,y)=\textsc{Error}].
\end{align*} 
where $\operatorname{Fallback@}1 = 1-\operatorname{Success@}1-\operatorname{Error@}1$. A method $\mathcal{A}$ \textit{Pareto-dominates} a method
$\mathcal{B}$ if it achieves a higher or equal $\operatorname{Success@}1$ for a lower or equal $\operatorname{Error@}1$ with a strict improvement in at least one metric.

\section{The Challenge of Optimizing Fallback Behavior}
\label{sec:method}
\label{sec:collapse}

A natural approach is to use the ternary reward in Section \ref{sec:setup} in a policy-gradient algorithm which maximizes the expected reward, that is maximize $
\mathcal{J}_{\mathrm{PG}}(x)
=p_S(x)+c\,p_F(x).$
Unfortunately, we find that this approach is highly sensitive to initialization and reward magnitude, necessitating tuning for every model and task combination. In the worst case, the policy can converge to a degenerate one that always executes the fallback with no task progress, or doesn't learn to fallback at all. 

\textbf{Vanishing gradients near saturated policies.} Why does such a natural objective exhibit this behavior? Formally, we show that this pathology emerges because the policy gradient vanishes when the policy approaches saturation. This implies that if the policy saturated on suboptimal outcomes during training, it can lead to poor convergence. We make a formal claim.

\begin{propositionbox}
{Gradient suppression near saturated policies [\textit{Proof: App.~\ref{app:collapse}}]}
{gradient-suppression}
Under the regularity conditions mentioned in Appendix \ref{app:collapse}, for all possible outcomes of the policy
$o\in\{\textsc{Success},\textsc{Fallback},\textsc{Error}\}$, \[
p_o\rightarrow 1\quad \Rightarrow \quad  \left\|\nabla_\theta \mathcal{J}_{\mathrm{PG}}(x)\right\|
=
O\!\left(1-p_o(x)\right).
\]
\end{propositionbox}

Intuitively, as the policy places nearly all of its probability on a single outcome, the gradient shrinks proportionally to the probability mass remaining on the other outcomes. Thus, once the policy becomes nearly deterministic, for example, always falling back, the gradient update required to move probability back toward success becomes vanishingly small. This result applies to standard softmax-parameterized policies, including auto-regressive language models, under a mild boundedness condition on the gradients of the logits. For a detailed exposition, refer to Appendix \ref{app:collapse}.

\section{Optimizing for Graceful Failure}
\label{sec:main_method}
This suggests that directly optimizing the objective of interest may not be the ideal decision, motivating the use of a surrogate objective with more favorable optimization properties. The distinction between the objective of interest and the objective used for optimization is ubiquitous in ML. In classification, for example, the quantity of interest is often the zero-one error, while training generally uses a cross entropy loss that generates more useful gradients \citep{tong,Bartlett01032006}. In particular, assigning fallback an intermediate scalar reward and using the expected reward as the objective \textit{encodes the right behavior but says little about whether it is easy to optimize}.

Section \ref{sec:collapse} shows that the expected-reward objective can provide a poor optimization landscape. A natural way to prevent either desirable event from becoming vanishingly unlikely is to maximize
its log-probability. Now, consider the 2 events: 
\[ 
E_1: \{\textsc{Success}\} \quad \quad \quad E_2: \neg\{\textsc{Error}\}
\]
Note that $\neg\textsc{Error} = \{\textsc{Success}, \textsc{Fallback}\}$. We jointly maximize the log-probability of both events using a weighted sum:
\[
\mathcal{J}_{\textsc{GracefulRL}}= 
(1-c)\cdot \log \mathbb{P}(E_1) + c \cdot \log \mathbb{P}(E_2) = (1-c)\log p_S(x)
+
c\log\left(1-p_E(x)\right).
\]

At $c=0$, the objective reduces to $\log p_S(x)$ where only success matters and fallback and error are treated equivalently. At $c=1$, it reduces to $\log \left(1-p_E(x)\right)$: which avoids errors, and success and fallback are treated equivalently. When $0<c<1$, we get an interpolation between these extremes.

\textbf{A log-barrier perspective.} This can be viewed as placing logarithmic barriers on two undesirable boundaries of the outcome simplex, $p_E \rightarrow 1$ and $p_S \rightarrow 0$ (see Figure \ref{fig:main-teaser}\textbf{(b)}). Both terms in the objective diverge when $p_E\rightarrow1$, whereas only the first diverges when $p_E+p_F\rightarrow1$. The objective is maximized when $p_S\rightarrow 1$. In simpler words, this induces a natural ordering, where success is preferred over fallback which is in turn preferred over failure. Now, we make a formal claim.

\begin{figure}[t]
    \centering
    \includegraphics[width=\textwidth]{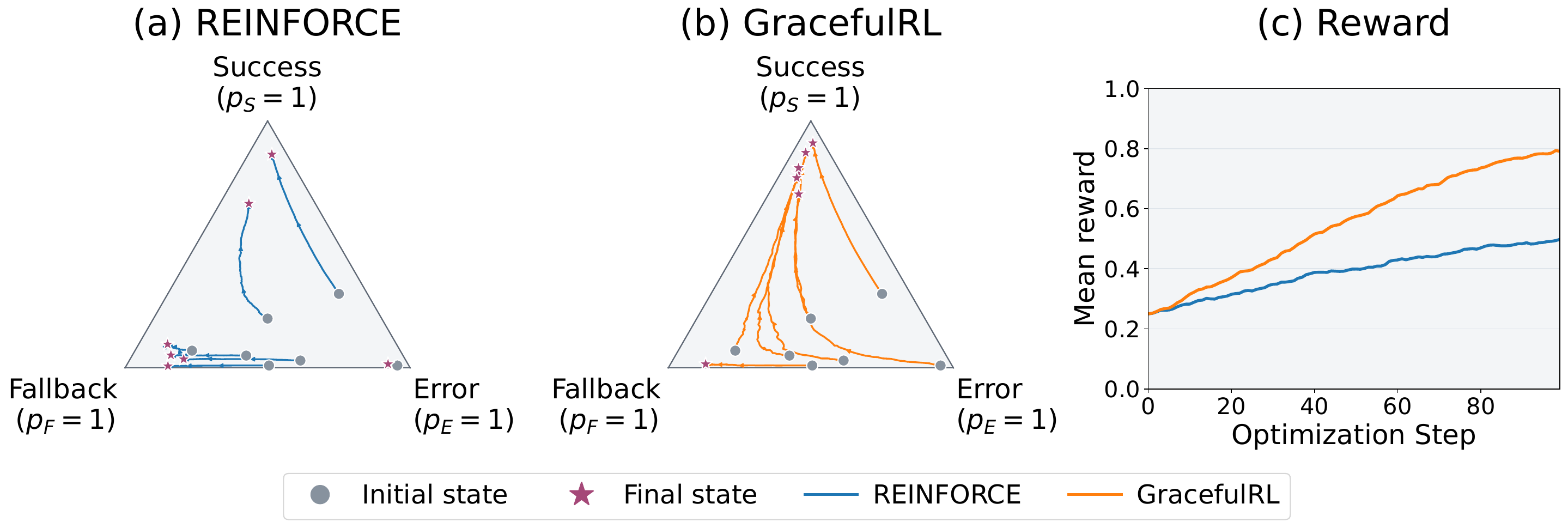}
    \caption{\textbf{A simple 3-class softmax policy.} We use logits $z_S, z_F$ and $z_E$ to create a policy $\pi = (p_S, p_F, p_E) = \mathrm{softmax}(z_S, z_F, z_E)$. \textbf{(a)} For initializations with low initial success probabilities, $\nabla_\theta\mathcal{J}_{pg}$ leads to traps near the fallback and error vertices ($p_F=1, p_E=1$),  \textbf{(b)} however, $\nabla_\theta\mathcal{J}_{\textsc{GracefulRL}}$ is more robust to initialization leading to higher success and \textbf{(c)} higher reward.}
    \label{fig:sgd-trajectories}
\end{figure}

\begin{propositionbox}
{Non-vanishing gradients near saturated policies [\textit{Proof: App.~\ref{app:collapse}}]}
{gradient-nonsuppression}
Under the regularity and non-degeneracy assumptions in Appendix \ref{app:collapse}, for either \textbf{sub-optimal} outcome $o\in\{\textsc{Fallback},\textsc{Error}\}$,
\[
p_o\rightarrow1\quad \Rightarrow\quad \left\|\nabla_\theta \mathcal{J}_{\mathrm{\textsc{GracefulRL}}}(x)\right\|
=
\Theta(1).
\]

\end{propositionbox}

In words, even when the policy's fallback or error probability gets high the gradient remains bounded away from zero, preserving a signal that can move the policy away from these degenerate solutions. Figure \ref{fig:sgd-trajectories} illustrates the differences between the methods. Standard policy gradient can get trapped near the fallback or error vertex when initialized with low success probability. Whereas, \textsc{GracefulRL} retains a recovery signal and moves toward higher success. When success is initially unlikely, it shifts probability from error to fallback before eventually moving toward success.

\subsection{The \textsc{GracefulRL} algorithm} 

\begin{wrapfigure}{r}{0.54\textwidth}
\refstepcounter{algorithm}
\label{alg:graceful-rl}
\vspace{-1.5em}
\textbf{Algorithm~\thealgorithm:}\quad \textsc{GracefulRL}
\vspace{0.3em}
\hrule

\begin{algorithmic}[1]
\Require Tasks $B$; rollouts $N$; policy $\pi_\theta$; verifier $f$;
fallback utility $c\in(0,1)$; smoothing constant $\epsilon$
\For{each task $x\in B$}
    \State Sample $y_{1:N}\sim\pi_\theta(\cdot\mid x)$
    \State $f_n\gets f(x,y_n)$ for $n=1,\ldots,N$
    \State $N_S\gets\sum_n\mathbf{1}\{f_n=\textsc{Success}\}$
    \State $N_F\gets\sum_n\mathbf{1}\{f_n=\textsc{Fallback}\}$
    \For{$n=1,\ldots,N$}
    \State $\displaystyle
            \begin{aligned}
            a_n \gets{}&
            \dfrac{(1-c)\mathbf{1}\{f_n=\textsc{Success}\}}
                  {N_S+\epsilon} \\
            &+
            \dfrac{c\mathbf{1}\{f_n\ne\textsc{Error}\}}
                  {N_S+N_F+\epsilon}
            \end{aligned}$
    \EndFor
    \State $\bar a\gets N^{-1}\sum_n a_n$
    \State $\widehat g(x)\gets N^{-1}\sum_n(a_n-\bar a)
    \nabla_\theta\log\pi_\theta(y_n\mid x)$
\EndFor
\State \Return $|B|^{-1}\sum_{x\in B}\widehat g(x)$
\end{algorithmic}
\hrule
\end{wrapfigure}

Conveniently, the objective derived above is the TailRL objective \citep{ramasubramanian2026taillikelihoodreinforcementlearning} specialized to the ternary reward $r\in\{0, c, 1\}$. While TailRL motivates the objective from a tail-likelihood and improving $\operatorname{Best@}K$ perspective, we provide a re-interpretation through the log barrier perspective in Section \ref{sec:main_method}, which highlights how the objective reshapes the loss landscape and prevents vanishing gradients. We make one change to the TailRL algorithm by adding a smoothing constant $\eps$. A small $\eps>0$ shrinks the large inverse-frequency weights that arise when success or non-error trajectories are rare which could reduce variance at the cost of some bias. We discuss the effects of smoothing in Appendix \cref{app:eps_theory,app:safemaze-epsilon,app:llm-epsilon}. Algorithm \ref{alg:graceful-rl} provides a reference implementation.

\section{Experiments}
Motivated by our theoretical insights, we study \textsc{GracefulRL} in three practical settings. Our first setting involves a maze navigation task with a small ResNet \citep{He_2016_CVPR} policy. This gives us a compute-efficient setup that allows us to scale training steps and examine the effects of different hyperparameters. Next, we move to a relevant problem: training LLMs to abstain on mathematical reasoning problems. Here we test whether \textsc{GracefulRL} scales to more realistic scenarios and also test meaningful generalization across different domains. Finally, we design an experiment where an LLM has to make irreversible sequential decisions in the Minesweeper game, where we discover that training for graceful failure can sometimes improve task performance as well.

\subsection{SafeMaze: A Controlled Testbed}
\label{sec:safemaze}
\begin{figure}[t]
    \centering
    \includegraphics[width=\textwidth]{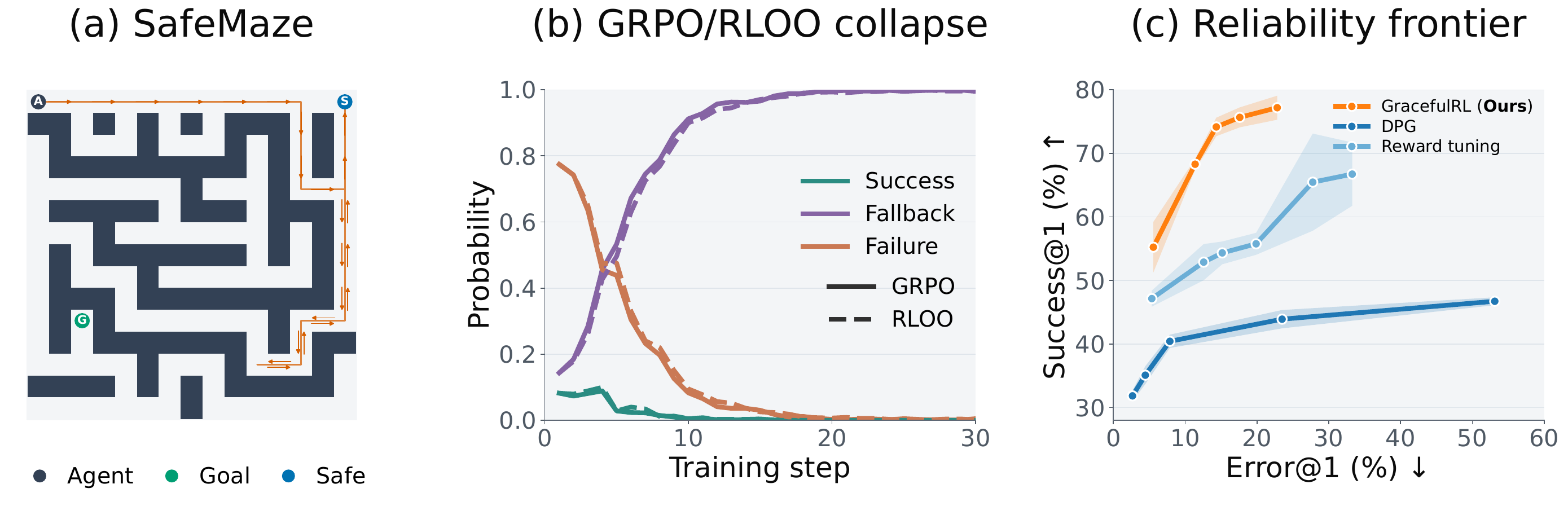}
    \caption{\textbf{(a)} The \textsc{SafeMaze} task requires the agent to reach the \textsc{Goal}, but if it can't, it should backtrack to \textsc{Safe}. \textbf{(b)} GRPO and RLOO can collapse to a degenerate policy $p_F = 1$ that is, always going to the \textsc{Safe} square. \textbf{(c)} \textsc{GracefulRL} is robust and Pareto-dominates reward-tuning and
DPG (across 3 seeds), achieving higher a $\operatorname{Success@}1$ rate at a lower $\operatorname{Error@}1$ rate.} 
    \label{fig:safemaze-summary}
\end{figure}

\textbf{Environment.}
For our first experimental setting, we introduce \textsc{SafeMaze}, a maze navigation task where an agent can either reach the \textsc{Goal}, which counts as \textsc{Success}, or navigate to a designated \textsc{Safe} square, which counts as \textsc{Fallback}. Crucially, the \textsc{Safe} square is deliberately much easier to reach: it is connected to the start by an always-clear passage, whereas reaching the \textsc{Goal} requires solving the maze within $50$ steps or less. The mazes are designed in a way that \textsc{Goal} is always reachable (see Figure~\ref{fig:safemaze-summary} for an example). Hitting a wall or timing out counts as an \textsc{Error}. Thus, the agent faces exactly the tradeoff of interest: \textit{pursue the harder task for the possibility of success, or take an easy and reliable fallback that avoids error but forgoes success}. We use a 4M parameter, slightly pre-trained residual CNN as the policy here. More details are given in Appendix \ref{app:safemaze}.

\subsubsection{Results on \textsc{SafeMaze}}

\textbf{Standard policy gradient methods are brittle.} First, we train GRPO and RLOO on the \textsc{SafeMaze} task with a fixed value of the reward for reaching \textsc{Safe}, set to 0.4. We observe that, neither of these variants learn a meaningful policy. Within a few gradient steps, the policy reaches a $100\%$ fallback rate and stops solving the task (Figure \ref{fig:safemaze-summary}). In general, we observe that this behavior is \textit{highly sensitive to the fallback reward} $c$. In Appendix \ref{app:safemaze-fixed-rloo}, using $c=0.1$ for RLOO drives the fallback probability toward zero. At $c=0.2$, training retains some success but quickly plateaus. For every tested $c \geq 0.3$, the policy rapidly collapses to the fallback-only solution. Thus, useful behavior only occurs in a narrow range of fallback rewards rather than emerging robustly from the desired reward ordering. Appendix~\ref{app:safemaze-entropy} shows that entropy bonuses also do not help in improving performance.

\textbf{Comparing with other baselines.} Next, we compare with two stronger baselines. We observe that a  meaningful policy can be learnt with RLOO when the reward for \textsc{Safe} is lower than the success rate of the policy. Motivated by this at time step $t$, online reward tuning uses $c=\alpha\cdot \widehat p_{S,t}$ where $\widehat p_{S,t}$ denotes the online success rate. We also test Delightful Policy Gradient (DPG)~\citep{osband2026delightful} that modifies policy-gradient updates to mitigate trapping near deterministic sub-optimal policies \citep{mei2026delightfulgradientsacceleratecorner}. Figure~\ref{fig:safemaze-summary} shows the reliability curve for each method after $20$k training steps for different hyperparameters. We see a clear trend: \textbf{\textsc{GracefulRL} converges to a higher} $\operatorname{Success@}1$ at lower $\operatorname{Error@}1$ compared to the baselines. \textit{Thus, \textsc{GracefulRL} is robust to reward magnitudes unlike standard policy gradient and learns useful fallback behavior when uncertain.} 

Finally, we also investigate the interplay of various hyperparameters, specifically the role of the number of rollouts $N$, the fallback reward $c$, and the smoothing parameter $\eps$ in Appendix~\ref{app:safemaze-rollouts} and Appendix~\ref{app:safemaze-epsilon}. We observe that performance improves with more rollouts $N$, particularly at higher values of $c$, and also that a higher $\eps$ value gives higher reward at smaller compute budgets.

\subsection{Language-Model RL for Mathematical Reasoning}
\label{sec:math-reasoning}

\begin{figure}[t]
    \centering
    \includegraphics[width=\textwidth]{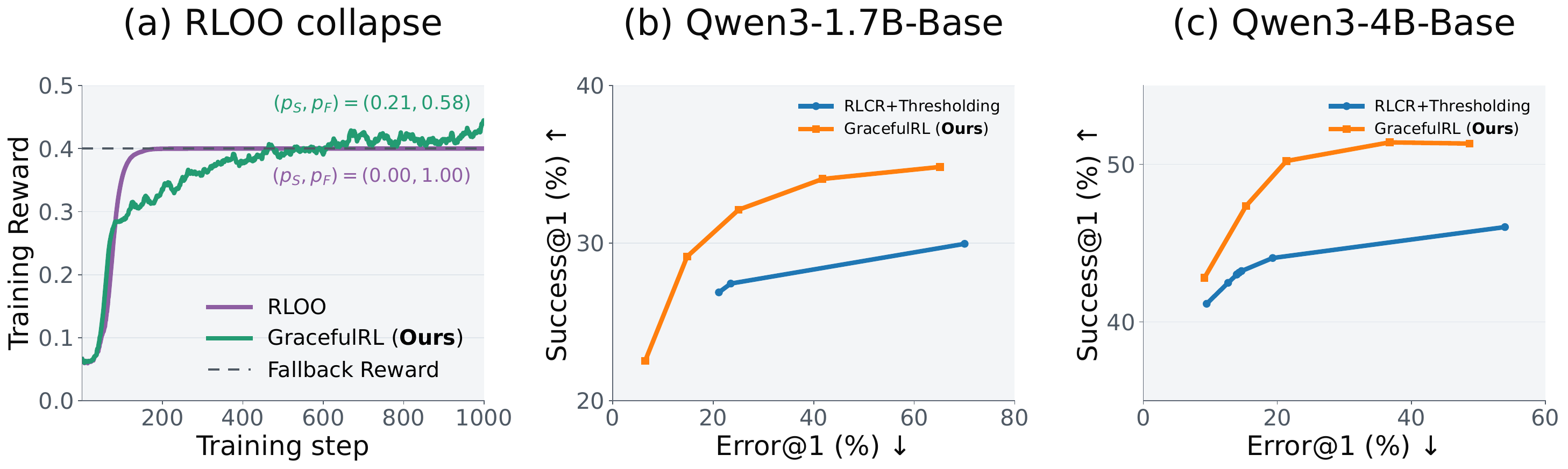}
    \caption{\textbf{(a)} Training for abstention using standard policy gradient (RLOO) can converge to a policy that always abstains, while our method learns a meaningful policy. \textbf{(b), (c)} Reliability frontiers for the trained Qwen3-1.7B-Base and Qwen3-4B-Base models, comparing RLCR+Thresholding and \textsc{GracefulRL}. \textsc{GracefulRL} achieves a higher success rate at a lower error rate. }
    \label{fig:rlcr-aggregate}
\end{figure}

Another important instance of graceful failure is \emph{abstention}. We study this setting in mathematical reasoning, our first LLM testbed. Here, a model must make two decisions: how to solve a problem, and whether it should attempt a final answer at all. For problems within its capabilities, the model should reason through them and produce a solution. For problems beyond its current competence, however, continuing to reason toward a final answer can lead to a confident but incorrect prediction. A graceful model should \textit{instead recognize when its reasoning is unlikely to succeed and abstain}. We therefore ask whether outcome-level RL can teach a language model not only to solve problems, but also to recognize when to stop trying. We first describe the experimental setup and the baseline.

\textbf{RLCR+Thresholding baseline.}
We compare against Reinforcement Learning with Calibration Rewards (RLCR)~\citep{damani2025beyond}, a recent uncertainty quantification approach, which trains language models for calibration. In RLCR, a model is trained to generate an answer and a confidence score \(q\in[0,1]\), and is then trained using a Brier-score based reward, that is, for a response $y$ for prompt $x$, 
$r_{\mathrm{RLCR}} = 
\mathbb{I}_\textrm{correctness}(y, x) - 
\left(\mathbb{I}_\textrm{correctness}(y, x)-q\right)^2$  \footnote{We give a reward of $-1.1$ for malformed outputs, since $-1$ is the minimum value of $r_\textrm{RLCR}.$}. To obtain a policy that abstains, we use a test time threshold $\tau$, where the model answers if confidence $q\geq\tau$ and abstains otherwise.

\textbf{Conditioned training (ours).} For \textsc{GracefulRL}, we jointly train a single model across five fallback utilities $c\in\{0,0.2,0.4,0.6,0.8\}$. We append a \textit{conditioning suffix} to each problem indicating the desired degree of caution. For $c=0$, it reads ``Be very optimistic and always attempt an answer, even if you are unsure,'' while for $c=0.8$ it reads ``Be quite careful and avoid guessing.'' The complete set appears in Appendix Table~\ref{tab:conditioning-suffixes}.
During training, we mix examples from all five conditioning levels and optimize each using the corresponding value of $c$, where correct answers, abstentions, and incorrect answers correspond to the \textsc{Success}, \textsc{Fallback}, and \textsc{Error} outcomes, respectively.

\textbf{Training and evaluation.}
We train the Qwen3-1.7B-Base and Qwen3-4B-Base \citep{yang2025qwen3technicalreport} models on the DAPO-Math-17K dataset \citep{yu2025dapo} using on-policy RL for $1{,}500$ steps and $1{,}000$ steps respectively. For evaluation, we use MATH500 \citep{hendrycks2021measuring}, AIME 2024-2026, AMC 2022-2024, and JEEBench (numeric) \citep{arora-etal-2023-llms}.  At test time, for RLCR, we sweep the threshold $\tau$ to trace its reliability curve, whereas for \textsc{GracefulRL}, we evaluate on each unique conditioning prompt, to measure the policy's reliability. Details are given in Appendix \ref{app:math-training}.

\subsubsection{Results on Language-Model RL}
\textbf{Pareto-dominance over the baseline.} From Figure \ref{fig:rlcr-aggregate} , our conditioned model consistently Pareto-dominates the baseline: it gets a higher success rate at a lower error rate across the entire frontier (Figure~\ref{fig:rlcr-aggregate}). The effect is especially strong on AIME for the 4B trained model (Figure \ref{fig:main-teaser}), where the error rate is reduced from $\sim79\%$ to $\sim9\%$ while sacrificing only about 3\% points of success compared to the $c=0$ conditioning which corresponds to standard RLVR.  We again note in Figure \ref{fig:rlcr-aggregate} that vanilla policy-gradient (RLOO) can collapse to a degenerate policy where for $c=0.4$, the Qwen3-1.7B-Base model trained using RLOO converges to a degenerate policy which always abstains on the training dataset, while \textsc{GracefulRL} continues to improve and achieves a success rate of $21\%$ with an abstention rate of $58\%$ at the end of training. This mirrors our finding in the \textsc{SafeMaze} setting. An online reward-tuning baseline can also lose abstention behavior during training (Appendix~\ref{app:llm-reward-tuning}). We show additional results in Appendix \cref{app:additional_lm_results,app:rlcr-per-dataset,app:llm-epsilon,app:rlcr-accuracy-calibration,app:llm-compute}.

\begin{figure}[t]
\centering
\begin{minipage}[t]{0.45\textwidth}
\vspace{0pt}
\begin{tcolorbox}[
  enhanced, width=\linewidth, equal height group=arxiv-cot,
  colback=propbluebg, colframe=propblue,
  colbacktitle=propblue, coltitle=white,
  title={RLVR incentivizes guessing}, fonttitle=\bfseries\footnotesize,
  fontupper=\fontsize{7.5}{8.8}\selectfont,
  fontlower=\fontsize{7.5}{8.8}\selectfont,
  boxrule=0.7pt, arc=1.5mm, left=2mm, right=2mm, top=0.8mm, bottom=0.8mm,
  before skip=0pt, after skip=0pt]
\textbf{Square partition.} ``This seems complicated. Maybe $P$ is at $(\frac12,\frac12)$, the center, but that gives 8 triangles, not 10. \ldots{} Given the equality of areas, perhaps $P$ must satisfy specific equations. \ldots{} Given the time constraints, \textcolor{propblue}{I'll guess $\boxed{1}$.}''
\tcblower
\textbf{Balance weights.} ``Place 1 on $L$, 2 on $L$: valid. \ldots{} Alternatively, perhaps 2 valid sequences. Not sure. This is speculative. Maybe the answer is $C_n$. Given the problem's structure, \textcolor{propblue}{I'll guess the answer is $C_n$.}''
\tcbline
\textbf{Sine roots.} ``Alternatively, perhaps the minimum is 1, since the function is not identically zero and is continuous. But in the example, we had more than 1 root. Maybe the answer is 2. \ldots{} In the example, this gave 2, but we had 5. Not matching. \ldots{} Given the ambiguity, and the problem's likely intent, \textcolor{propblue}{I'll guess $\boxed{2}$.}''
\end{tcolorbox}
\end{minipage}\hspace{0.02\textwidth}%
\begin{minipage}[t]{0.45\textwidth}
\vspace{0pt}
\begin{tcolorbox}[
  enhanced, width=\linewidth, equal height group=arxiv-cot,
  colback=BurntOrange!5, colframe=BurntOrange!70!black,
  colbacktitle=BurntOrange!70!black, coltitle=white,
  title={\textsc{GracefulRL} acknowledges limitations}, fonttitle=\bfseries\footnotesize,
  fontupper=\fontsize{7.5}{8.8}\selectfont,
  fontlower=\fontsize{7.5}{8.8}\selectfont,
  boxrule=0.7pt, arc=1.5mm, left=2mm, right=2mm, top=0.8mm, bottom=0.8mm,
  before skip=0pt, after skip=0pt]
\textbf{Coin counting.} ``Given the complexity and the need for a precise count, I'm not entirely confident in solving this problem manually within the time allowed. I might need more advanced mathematical tools or a computational approach to find the exact number. Therefore, I'll output:''\nobreak\hspace{0.3em}{\color{BurntOrange!75!black}$\boxed{\text{I don't know}}$}
\tcblower
\textbf{Hexagon.} ``I might need more advanced geometric techniques or specific theorems that I'm not currently recalling. I'm not confident in solving this problem at this moment.''\nobreak\hspace{0.3em}{\color{BurntOrange!75!black}$\boxed{\text{I don't know}}$}
\tcbline
\textbf{Urn.} ``This discrepancy indicates that I might have made a mistake in the derivation or that the problem requires a different method. Given the complexity and the time invested, I'm not confident in solving this problem at the moment.''\nobreak\hspace{0.3em}{\color{BurntOrange!75!black}$\boxed{\text{I don't know}}$}
\end{tcolorbox}
\end{minipage}
\caption{\textbf{Different behaviors under uncertainty.} A model trained with RLVR has no incentive to acknowledge limitations, and guesses when uncertain. But a model trained with \textsc{GracefulRL} verbalizes and acknowledges its lack in capabilities. Full traces appear in Appendix~\ref{app:abstention-reasons}.}
\label{fig:cot-contrast}
\end{figure}

\textbf{Explicit verbalization in the CoT.} Strikingly, CoT generated by the model trained with \textsc{GracefulRL} explicitly verbalizes notions of uncertainty and chooses to use the fallback when uncertain.
Figure~\ref{fig:cot-contrast} show three illustrative `semantic' patterns: hitting a time limit, being unable to recall a technique, and abstaining when it finds a mistake in its reasoning. In contrast, a standard RLVR-trained model\footnote{We train a separate Qwen3-4B-Base with a neutral prompt with no cues about guessing/being cautious, and train using MaxRL.} has no incentive to abstain and may continue with a speculative answer. Such guessing behavior is undesirable because it shifts the burden of verification onto the user. Moreover, signals that a model is guessing need not be verbalized, could be difficult to recognize, or expressed in a form/language that is hard to interpret. Therefore, explicitly training the model for fallback behavior provides a more robust strategy.

\Needspace{16\baselineskip}
\begin{wrapfigure}{r}{0.27\textwidth}
\centering
\includegraphics[width=\linewidth]{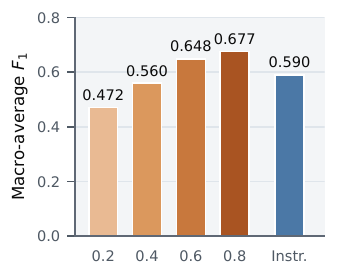}
\caption{\raggedright Abstention $F_1$: \textsc{GracefulRL} (orange), Instruct (blue).}
\label{fig:abstention-f1-aggregate}
\end{wrapfigure}

\textbf{Out-of-distribution transfer.} We evaluate our model trained on math problems on AbstentionBench~\citep{kirichenko2025abstentionbench}, which contains unknown, underspecified, and otherwise unanswerable questions from diverse domains such as History, MedicalQA, Geography and MoralQA. We find that the model retains its intended behavior. More cautious suffixes produce more fallbacks on questions that can't be answered reliably, while optimistic suffixes attempt more responses. The average abstention $F_1$ score rises from $0.472$ at $c=0.2$ to $0.677$ at $c=0.8$ outperforming the Qwen3-4B-Instruct-2507 model which scores $0.590$ (Figure~\ref{fig:abstention-f1-aggregate}). This indicates hints of generalization to domains other than mathematical reasoning. Detailed results on the entire benchmark appear in Appendix~\ref{app:abstentionbench}.

\subsection{Multi-Turn RL in Minesweeper}

Finally, we use an LLM as an agent in a multi-turn Minesweeper game~\citep{tajwar2025training}. This setting gives us a testbed that involves a \textit{sequential decision-making} environment with irreversible actions. The environment consists of a grid containing hidden mines and revealed cells which indicate the number of mines in their neighboring cells. At each step, the agent observes the current  board state and selects a cell to reveal. The game has $N$ turns and the rules are as follows:
\begin{enumerate}
    \item Revealing a mine immediately ends the game with an \textsc{Error}.
    \item Successfully revealing all non-mine cells by the end of the game, results in 
    \textsc{Success}.
    \item Avoiding mines but not revealing all non-mine squares by the end is a \textsc{Fallback}.
\end{enumerate}  
The \textsc{Fallback} behavior promotes conservative actions, even if it means giving up on completing the puzzle. Minesweeper pushes graceful failure beyond one-shot uncertainty estimation. The agent must repeatedly decide whether the available evidence justifies another action, with every move changing both the available information and the uncertainty of subsequent decisions. Consequently, methods such as RLCR or majority-vote thresholding, which attach confidence to a single generated answer, \textit{do not directly apply in this scenario}. This makes Minesweeper a natural setting for \textbf{end-to-end policy optimization from outcome-level rewards}. We examine whether \textsc{GracefulRL} can teach the agent \emph{when to keep playing and when continued action is no longer worth the risk}.

\textbf{Training and Evaluation.} We train Qwen3-4B-Instruct-2507 for 250 RL steps on puzzles requiring $N=8$ steps to solve and evaluate on $500$ held-out puzzles meant to be solved at a $N=12$-turn budget. Appendix~\ref{app:minesweeper} specifies the board construction, training configuration, and evaluation protocol.
\subsubsection{Results on Minesweeper}

\begin{figure}[t]
    \centering
    \begin{minipage}[c]{0.51\textwidth}
        \centering
        \small
        \begin{tabular}{@{}cccc@{}}
            \toprule
            Method & $c$ & $\operatorname{Success@}1(\uparrow)$ & $\operatorname{Error@}1(\downarrow)$ \\
            \midrule
            MaxRL & $0$ & $67.4$ & $17.0$ \\
            GracefulRL & $0.25$ & $66.0$ & $13.4$ \\
            GracefulRL & $0.5$ & $\mathbf{77.4}$ & $\mathbf{6.0}$ \\
            GracefulRL & $0.75$ & $73.6$ & $\mathbf{5.6}$ \\
            \rowcolor{gray!15}
            RLOO & $0.5$ & $0.0^{\dagger}$ & $0.0^{\dagger}$ \\
            \bottomrule
        \end{tabular}
        \vspace{0.3em}
        {\\$\dagger$ Collapses to always taking the fallback action.}
    \end{minipage}\hspace{0.025\textwidth}%
    \begin{minipage}[c]{0.35\textwidth}
        \centering
        \includegraphics[width=\linewidth]{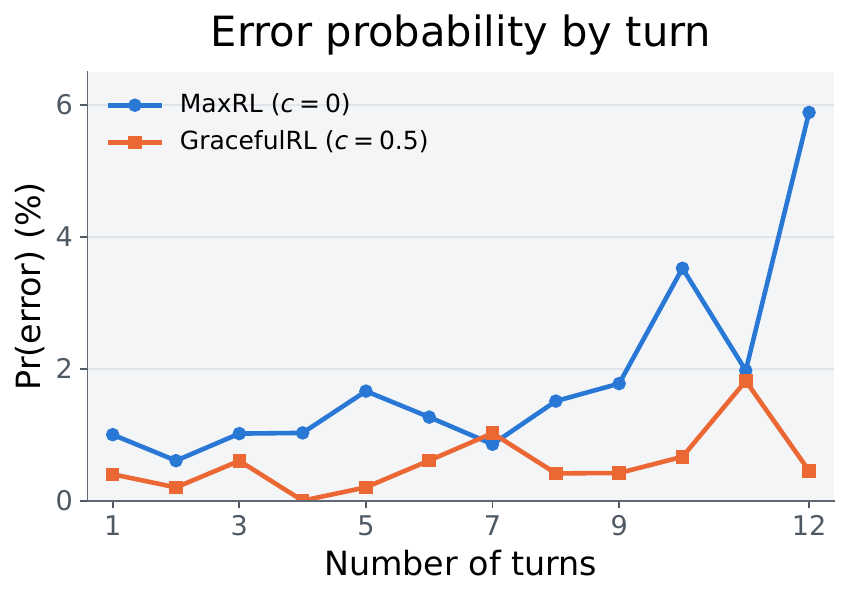}
    \end{minipage}
    \caption{\textbf{Minesweeper results.} \textbf{(a)} $\operatorname{Success@}1$ and $\operatorname{Error@}1$ rates after training the Qwen3-4B-Instruct-2507 model. In this setting, \textsc{GracefulRL} achieves a lower rate \textbf{and} higher performance compared to RL with no fallback option. \textbf{(b)} Probability of an error on each turn for MaxRL ($c=0$) and GracefulRL ($c=0.5$). Number of errors significantly increase at the end of the game for MaxRL since the incentive to `play it safe' reduces and risk taking moves offer better expected return.}
    \label{fig:minesweeper-results}
\end{figure}

\textbf{Graceful failure can (sometimes) improve task success.} In the preceding settings,
greater caution generally trades some success for fewer errors. However, in Minesweeper, varying $c$ reveals a surprising finding: a fallback reward can also improve task success (Figure~\ref{fig:minesweeper-results}a). At $c=0.5$, \textsc{GracefulRL} achieves $77.4\%$ success and $6.0\%$ error, \textbf{improving both success rate and lowering error rate}  over MaxRL. We hypothesize that in this setting, learning a reliable fallback steers the model toward safer board states from which successful completion is more accessible. In this sense, the fallback reward may act as a form of reward shaping. RLOO with $c=0.5$ collapses achieving neither successes nor errors mirroring our findings in the Sections \ref{sec:safemaze} and \ref{sec:math-reasoning}.

\textbf{Errors concentrate late in the game.} We also observe a striking pattern: for the MaxRL model ($c=0$), errors become significantly more frequent in the latter half of the 12-turn game (Figure~\ref{fig:minesweeper-results}b). This behavior is consistent with the underlying incentive structure. As the game progresses and successful completion becomes less likely, a safe action that merely avoids failure receives no incentive, \textit{whereas a risky move retains some chance of producing a successful trajectory}. The resulting incentive increasingly favors `gambling' on low-probability success over preserving a safe state. In contrast, \textsc{GracefulRL} ($c=0.5$) explicitly rewards safe fallback behavior, and the probability of error remains markedly lower near the end of the episode, including on the final turn.

\section{Related Work}

\textbf{Policy-gradient optimization challenges.} Vanishing gradients in policy gradients are well known, studied even in tabular MDPs \citep{JMLR:v22:19-736}. \citet{NEURIPS2020_f1cf2a08} identify a ``gravity well'' induced by the softmax parameterization, leading to poor optimization while \citet{pmlr-v134-li21b} further show that softmax PG methods can require exponentially many steps to converge. More recently, \citet{razin2024vanishing} studied vanishing gradients in LLM RL finetuning showing that the gradient becomes small when the reward variance of the policy is small. Our analysis complements these results by studying a specific case with 3 outcomes where standard PG methods can get trapped near saturated outcomes. We further show that the \textsc{GracefulRL} objective avoids this pathology, retaining a non-vanishing optimization signal near such saturated solutions.

\textbf{Likelihood-based objectives for RL.} Recent work has explored replacing expected-reward objectives with likelihood-based objectives exemplified by MaxRL \citep{tajwar2026maximum} and TailRL \citep{ramasubramanian2026taillikelihoodreinforcementlearning}. They show that these objectives admit an interpretation as a harmonically weighted combination of \(\operatorname{Best@}k\) objectives. We arrive at the same solution through an optimization perspective. We analyse how the logarithmic objective avoids gradient suppression near undesirable saturation which prevents a failure mode in expected reward maximization.

\textbf{Uncertainty quantification for LLMs.} 
Selective classification studies the tradeoff between prediction risk and coverage \citep{1054406,JMLR:v11:el-yaniv10a,pmlr-v97-geifman19a,NEURIPS2019_0c4b1eeb}. Recent work on LLM uncertainty and abstention has constructed specialized data to train refusal behavior \citep{zhang-etal-2024-r,zhao2026gracereinforcementlearninggrounded}, elicited verbalized confidence \citep{kadavath2022languagemodelsmostlyknow,lin2022teaching,tian-etal-2023-just,xiong2024can}, estimated uncertainty through repeated sampling \citep{manakul2023selfcheckgpt,kang2025scalable,wang2023selfconsistency}, combined multiple models at inference time \citep{stengel-eskin2024lacie}, directly trained models for uncertainty quantification \citep{damani2025beyond,leng2025taming,chandak2026scaling}, or introduced inference-time stopping rules that detect when continued reasoning is unlikely to help \citep{davidov2026knowing}. Our goal however is different: rather than estimating uncertainty, we aim to \emph{learn how the policy should act under uncertainty}. This distinction becomes important in sequential decision-making settings, where uncertainty quantification is not enough: the fallback policy execution must also be a part of the policy itself and its learnt end-to-end. Moreover, our approach doesn't require specialized datasets, is not LLM-specific, and naturally extends to multi-step interactions.

\textbf{Learning abstention with RL.}
Some prior works try to use RL for abstention. For instance, \citet{xu2024rejectionimprovesreliabilitytraining} use knowledge feedback to identify questions outside a model's knowledge boundary and reward rejection on these examples. \cite{wei2026truthrl}, \cite{pan2026tiartrajectoryinformedadvantagereweighting}, \cite{ma2026karlknowledgeawarereasoningreinforcement}, \cite{wu2026mitigating} and \cite{zhai-etal-2026-abstain} directly train for abstention using policy gradient style methods such as GRPO and PPO with and without heuristics such as reward tuning or modifying advantages. However, we show that this procedure can be brittle depending on the model and the task. We show that our approach is more robust and avoids issues associated with PG methods in a principled manner. Moreover, our formulation treats fallback as a general behavior rather than an LLM-specific refusal, allowing it to extend to sequential and non-language decision-making settings.

\textbf{Safe and constrained reinforcement learning.}
Our work is also related to safe RL, which studies improving task performance while satisfying safety requirements or constraints \citep{JMLR:v16:garcia15a,pmlr-v70-achiam17a}. Other approaches intervene when an unsafe action is likely; for example, Recovery RL learns a separate recovery policy that takes control near constraint-violating regions \citep{thananjeyan2021recoveryrlsafereinforcement}. In contrast, we jointly learn task success and graceful fallback, without a secondary recovery policy intervening.

\section{Conclusion and Limitations}

 We formulated \textit{graceful failure} as an RL problem in which an agent must learn when to pursue task success and when to fall back safely. We showed that naively optimizing the expected reward can be brittle due to the underlying optimization dynamics. To address it, we introduced \textsc{GracefulRL} which provides a suitable landscape for optimization to succeed. Across the settings studied in this work, \textsc{GracefulRL} learns favorable tradeoffs between task success and error, demonstrating that fallback behavior can be learned directly as part of the policy. However, our method is designed for online RL, and we have not studied its behavior under off-policy training. We also do not investigate settings involving coding agents or long horizon agentic training due to constraints on compute. Finally, specifying appropriate fallback behavior can be difficult and may differ substantially across domains. Defining \textit{graceful failure} across different domains, deriving scaling laws for it, and evaluating its generalization to unseen tasks remains an important direction for future research.

\section{Acknowledgements}

We would like to thank Aarti Singh, Sachin Goyal, Pratyush Maini, Samuel Sokota, Jatin Prakash, Matan Shtepel, Zico Kolter, Gaurav Ghoshal and Bhavya Agrawalla for their feedback on the project and suggestions for improving the manuscript. DA is especially grateful to Yiding Jiang for their guidance on approach to research and for suggesting experimental choices that helped shape the work. This work used GH200 GPUs at DeltaAI through allocation CIS261151 and CIS250527 from the Advanced Cyberinfrastructure Coordination Ecosystem: Services \& Support (ACCESS) program, which is supported by U.S. National Science Foundation grants \#2138259, \#2138286, \#2138307, \#2137603, and \#2138296. The authors also thank the CMU FLAME center and the CMU Babel Compute Cluster for additional compute support during the project. DA is supported by the National Science Foundation under Grants CCF-2106778 and DMS-2134080. YS thanks the support of the Two Sigma
Fellowship. FT gratefully acknowledges the support of Qualcomm Innovation
Fellowship and Bosch Research and Technology Center.

\bibliography{iclr2027_conference}

\appendix
\section{Proof of the Results in Section~\ref{sec:collapse}}
\label{app:collapse}

\begin{enumerate}
\item First, we fix terminology and show some simple identities which relate outcome probabilities to conditional score functions in  Section~\ref{app:setup}.    
\item Second, we show that under a boundedness assumption on the logit gradients, a standard regularity condition, the conditional score functions are bounded in Section~\ref{app:bounded-score-function}.
\item Then, we show that this causes the gradient of the expected reward objective to vanish when the policy approaches saturation in Section~\ref{app:reinforce-vanishing},
\item Finally, we show that under a non-degeneracy condition, the gradient of the log-barrier based objective does not vanish near sub-optimal saturation in Section~\ref{app:tailrl-nonvanishing}.
\end{enumerate}

It is worth noting that our proofs are only for the population level gradients and not for stochastic estimators which we leave for future work.
\subsection{Setup}
\label{app:setup}

Throughout we fix a prompt $x$, a fallback reward of $c\in (0, 1)$, and write $\mathcal T_o$ for the set of
trajectories whose outcome is $o\in\{S,F,E\}$ (\textsc{Success}, \textsc{Fallback}, and \textsc{Error}
respectively), and denote $p_o(\theta)$ for its probability under $\pi_\theta(\cdot\mid x)$. Next, we define the \textit{conditional score function}.
\[
    g_o(\theta)
    :=
    \mathbb E_{\tau\sim\pi_\theta(\cdot\mid x)}
    \left[
        \nabla_\theta\log\pi_\theta(\tau\mid x)
        \;\middle|\;
        \tau\in\mathcal T_o
    \right]
\] 
Differentiating $p_o=\sum_{\tau\in\mathcal T_o}\pi_\theta(\tau\mid x)$ and using
$\nabla_\theta\pi_\theta=\pi_\theta\nabla_\theta\log\pi_\theta$ gives
\begin{equation}
    \nabla_\theta p_o(\theta)
    =
    p_o(\theta)g_o(\theta),
    \qquad o\in\{S,F,E\}.
    \label{eq:conditional-score-identity}
\end{equation}
It is also worth noting the following identity:
\begin{equation}
\begin{aligned}
\nabla_\theta(p_S+p_F+p_E)
&= \nabla_\theta(1) = 0, \\
\Rightarrow\quad
p_S g_S + p_F g_F + p_E g_E
&= 0.
\end{aligned}
\label{eq:conditional-score-zero}
\end{equation}

Identity ~\ref{eq:conditional-score-identity} shows that the gradient of an outcome probability is the
\emph{probability} of outcome $p_Y$ times a \emph{direction} $g_Y$. 

Also, we define the softmax parameterization. We consider policies parameterized by softmax over a network producing logits
$z_\theta(s_t)\in\mathbb{R}^{|\mathcal A|}$:
\[
\pi_\theta(a_t\mid s_t)
=
\frac{\exp(z_\theta(s_t)_{a_t})}
{\sum_{a'\in\mathcal A}\exp(z_\theta(s_t)_{a'})}
\]
Autoregressive language models are a special case where $\mathcal A$ is the
vocabulary of the model and $s_t=(x,a_{<t})$ is the first $t$ generated tokens along with the prompt. Next, we show that a common assumption of boundedness of logit gradients leads to boundedness of the conditional score function under the softmax parameterization.

\subsection{Bounded Conditional Scores}
\label{app:bounded-score-function}
\begin{assumption}[Bounded logit gradients]
\label{ass:bounded-logit-gradients}
There exists a constant $B < \infty$ such that, for every
$\theta\in\Theta$, every state $s$, and every discrete action $a\in\mathcal A$,
\[
    \left\|
        \nabla_\theta z_\theta(s)_a
    \right\|
    \leq B.
\]
\end{assumption}

\begin{lemma}[Bounded conditional scores]
\label{lem:score-bound}
Under Assumption~\ref{ass:bounded-logit-gradients}, and writing $G:=2HB$,
\begin{equation}
    \|g_o(\theta)\|
    \leq G,
    \qquad
    o\in\{S,F,E\},
    \label{eq:conditional-score-upper-bound}
\end{equation}
for every $\theta\in\Theta$.
\end{lemma}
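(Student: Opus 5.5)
The argument goes in three steps: bound the per-token score, bound the trajectory score, then average over the conditional distribution.

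\textbf{Per-token score.} Fix a state $s$ and action $a$. Differentiating the softmax log-probability gives
\[
\nabla_\theta\log\pi_\theta(a\mid s)=\nabla_\theta z_\theta(s)_a-\sum_{a'\in\mathcal A}\pi_\theta(a'\mid s)\,\nabla_\theta z_\theta(s)_{a'} .
\]
By Assumption~\ref{ass:bounded-logit-gradients}, the first term has norm at most $B$. The second term is a convex combination of vectors of norm at most $B$, so its norm is also at most $B$. The triangle inequality then gives $\|\nabla_\theta\log\pi_\theta(a\mid s)\|\le 2B$, uniformly in $\theta$, $s$ and $a$.

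\textbf{Trajectory score.} For a trajectory $\tau=(a_1,\dots,a_T)$ with $T\le H$, the log-probability factorizes as $\log\pi_\theta(\tau\mid x)=\sum_{t=1}^T\log\pi_\theta(a_t\mid s_t)$. This holds because the states $s_t=(x,a_{<t})$ are deterministic given the past, which is the autoregressive case the paper considers. Summing the per-token bound over at most $H$ terms gives $\|\nabla_\theta\log\pi_\theta(\tau\mid x)\|\le 2HB=G$ for every trajectory.

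If the environment has stochastic transitions, the transition factors do not depend on $\theta$ and so drop out of the gradient. If a termination or fallback decision is made by an action, that action is covered by the same per-token bound.

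\textbf{Conditional expectation.} The conditional score $g_o(\theta)$ is an expectation of $\nabla_\theta\log\pi_\theta(\tau\mid x)$ under the distribution of $\tau$ conditioned on $\tau\in\mathcal T_o$. By Jensen's inequality, or simply the triangle inequality for a finite or countable average, its norm is at most the supremum of the integrand's norm, which is $G$. This is the claimed bound~\eqref{eq:conditional-score-upper-bound}.

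One subtlety is that $g_o$ is only defined when $p_o(\theta)>0$. Where $p_o(\theta)=0$, I would adopt the convention $g_o:=0$. This keeps identities \eqref{eq:conditional-score-identity} and \eqref{eq:conditional-score-zero} valid and trivially satisfies the bound.

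Each step is routine. The only places needing care are the convex-combination bound on the softmax normalizer, and checking that the horizon bound $T\le H$ holds for every trajectory so that the sum contributes at most the factor $H$.
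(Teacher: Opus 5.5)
Your proof is correct and matches the paper's argument step for step: the $2B$ per-token bound (the logit gradient minus a convex combination of logit gradients), summation over at most $H$ steps to get $G=2HB$, and the convex-combination bound on the conditional expectation. Your extra remarks on the $\theta$-independent transition factors and the convention $g_o:=0$ when $p_o=0$ are harmless refinements that the paper leaves implicit.
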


\begin{proof}
For the softmax policy,
\[
    \nabla_\theta \log \pi_\theta(a\mid s)
    =
    \nabla_\theta z_\theta(s)_a
    -
    \sum_{a'\in\mathcal A}
    \pi_\theta(a'\mid s)\nabla_\theta z_\theta(s)_{a'} .
\]
The second term is a convex combination of vectors of norm at most $B$, so it has norm at most $2B$
\[
    \Rightarrow\left\|
        \nabla_\theta \log \pi_\theta(a\mid s)
    \right\|
    \leq 2B.
\]
Summing over an entire trajectory $\tau$ which is at most $H$ steps,
\[
    \left\|
        \nabla_\theta \log \pi_\theta(\tau\mid x)
    \right\|
    \leq 2HB
    =
    G .
\]
Each $g_Y$ is a conditional expectation of these trajectory scores, hence a
convex combination of vectors of norm at most $G$, which
gives~\eqref{eq:conditional-score-upper-bound}.
\end{proof}

\subsection{Policy Gradient Vanishes Near Saturation}
\label{app:reinforce-vanishing}

First, we express the policy gradients in terms of the conditional scores. The objective function for policy gradient is the expected reward $p_S+cp_F$. Using 
identity~\ref{eq:conditional-score-identity} we get
\begin{equation}
\begin{aligned}
    \mathcal J_{\mathrm{pg}}(x)
    &= p_S + c p_F,\\
    \Rightarrow \nabla_\theta \mathcal J_{\mathrm{pg}}(x)
    &= p_S\,g_S + c\,p_F\,g_F
\end{aligned}
\label{eq:reinforce-gradient}
\end{equation}

\begin{proposition}[Gradient suppression under fallback saturation]
\label{prop:reinforce-vanishing}
Suppose Assumption~\ref{ass:bounded-logit-gradients} holds. Then for every $\theta\in\Theta$, and every outcome $o \in \{\textsc{Success}, \textsc{Fallback}, \textsc{Error}\}$, as the policy saturates on any outcome, that is, when $p_o\rightarrow1$
\[
    \left\|
        \nabla_\theta \mathcal{J}_{\mathrm{pg}}(\theta)
    \right\|
    \le
    G\bigl(1-p_o(x)\bigr),
\]
\end{proposition}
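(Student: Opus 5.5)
The plan is to prove the bound for every $\theta$ directly, with no limiting argument. The two ingredients are the gradient formula \eqref{eq:reinforce-gradient}, $\nabla_\theta\mathcal J_{\mathrm{pg}}=p_S g_S+c\,p_F g_F$, and the uniform bound $\|g_o\|\le G$ from Lemma~\ref{lem:score-bound}. The difficulty is that the naive triangle inequality gives $G(p_S+c\,p_F)$. This is small only when $p_E\to1$, and it is not small when $p_S\to1$ or $p_F\to1$.

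To handle those cases, I will use the zero-sum identity \eqref{eq:conditional-score-zero}, $p_Sg_S+p_Fg_F+p_Eg_E=0$. It lets me eliminate the conditional score of the saturating outcome, so that only the \emph{other} outcomes' probabilities appear as coefficients. I then split into the three cases of $o$.

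\emph{Case $o=\textsc{Error}$.} Apply the triangle inequality and Lemma~\ref{lem:score-bound} directly:
\[
\|\nabla_\theta\mathcal J_{\mathrm{pg}}\|\le G(p_S+c\,p_F)\le G(p_S+p_F)=G(1-p_E),
\]
where the middle step uses $c<1$.

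\emph{Case $o=\textsc{Success}$.} Substitute $p_Sg_S=-(p_Fg_F+p_Eg_E)$ to obtain
\[
\nabla_\theta\mathcal J_{\mathrm{pg}}=-(1-c)\,p_Fg_F-p_Eg_E .
\]
Its norm is at most $G\bigl((1-c)p_F+p_E\bigr)\le G(p_F+p_E)=G(1-p_S)$.

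\emph{Case $o=\textsc{Fallback}$.} Substitute $p_Fg_F=-(p_Sg_S+p_Eg_E)$ to obtain
\[
\nabla_\theta\mathcal J_{\mathrm{pg}}=(1-c)\,p_Sg_S-c\,p_Eg_E .
\]
Its norm is at most $G\bigl((1-c)p_S+c\,p_E\bigr)\le G(p_S+p_E)=G(1-p_F)$, using $0<c<1$.

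In all three cases the bound holds for every $\theta$, so it holds in particular as $p_o\to1$, which gives the stated $O(1-p_o)$ rate. The only non-mechanical step is the elimination via \eqref{eq:conditional-score-zero} in the Success and Fallback cases. Once that is done, the argument is just the triangle inequality together with $c\in(0,1)$.
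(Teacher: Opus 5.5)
Your proposal is correct and matches the paper's proof essentially line for line. It uses the same three-case split, the same direct triangle-inequality bound $G(p_S+c\,p_F)\le G(1-p_E)$ for \textsc{Error}, and the same elimination of the saturating outcome's conditional score via $p_Sg_S+p_Fg_F+p_Eg_E=0$ for \textsc{Success} and \textsc{Fallback}, which yields the identical expressions $-(1-c)p_Fg_F-p_Eg_E$ and $(1-c)p_Sg_S-c\,p_Eg_E$ and hence a bound holding uniformly in $\theta$.
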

\begin{proof}
Under Assumption~\ref{ass:bounded-logit-gradients}, the conditional scores satisfy $\|g_S\|,\|g_F\|,\|g_E\|\le G.$ We consider the three possible saturated outcomes separately.

If $o=\textsc{Error}$, then from Eq.~\ref{eq:reinforce-gradient},
$$
\begin{aligned}
\|\nabla_\theta \mathcal{J}_{\mathrm{pg}}\|
&=\|p_Sg_S+c p_Fg_F\|\\
&\le G(p_S+c p_F)\\
&\le G(p_S+p_F)
=G(1-p_E).
\end{aligned}
$$

If $o=\textsc{Fallback}$, using $p_Fg_F=-p_Sg_S-p_Eg_E$ from \eqref{eq:conditional-score-zero} gives $ \nabla_\theta \mathcal{J}_{\mathrm{pg}} =(1-c)p_Sg_S-cp_Eg_E. $ Hence,
$$
\begin{aligned}
\|\nabla_\theta \mathcal{J}_{\mathrm{pg}}\|
&\le G\bigl((1-c)p_S+c p_E\bigr)\\
&\le G(p_S+p_E)
=G(1-p_F).
\end{aligned}
$$

Finally, if $o=\textsc{Success}$, using $p_Sg_S=-p_Fg_F-p_Eg_E$ from \eqref{eq:conditional-score-zero} gives $ \nabla_\theta \mathcal{J}_{\mathrm{pg}} =-(1-c)p_Fg_F-p_Eg_E,$ and therefore
$$
\begin{aligned}
\|\nabla_\theta \mathcal{J}_{\mathrm{pg}}\|
&\le G\bigl((1-c)p_F+p_E\bigr)\\
&\le G(p_F+p_E)
=G(1-p_S).
\end{aligned}
$$

Thus, for any $o\in\{{\textsc{Success},\textsc{Fallback},\textsc{Error}}\}$, $ \|\nabla_\theta \mathcal{J}_{\mathrm{pg}}\| \le G(1-p_o), $ which goes to zero as $p_o\to1$.
\end{proof}

\subsection{Gradient of the log-barrier objective does not vanish}
\label{app:tailrl-nonvanishing}

\begin{assumption}[Non-degeneracy]
\label{ass:reachability}
There exist $\varepsilon,\delta_F,\delta_E>0$ such that for all $\theta\in\Theta$:
\[p_F(\theta)\ge 1-\varepsilon \Rightarrow \|g_S(\theta)\|\ge\delta_F,\] and
\[p_E(\theta)\ge 1-\varepsilon \Rightarrow \inf_{\lambda\in[1-c,1]}\|\lambda g_S+(1-\lambda)g_F\|\ge\delta_E\].
\label{ass:reachability-error}
\end{assumption}

Intuitively, this assumption requires that, near a degenerate policy, there remains some parameter direction along which probability mass can be moved toward a better outcome. When the policy is almost entirely concentrated on \textsc{Fallback}, the success gradient must remain nonzero, so the model can still move probability mass toward \textsc{Success}. Similarly, when the policy is almost entirely concentrated on \textsc{Error}, the gradients associated with \textsc{Success} and \textsc{Fallback} must not cancel for any weighting induced by the objective.

Note that Assumption~\ref{ass:reachability} does not change the implication of Proposition~\ref{prop:reinforce-vanishing}. That result is an upper bound on the policy-gradient norm and follows from Assumption~\ref{ass:bounded-logit-gradients} alone. The non-degeneracy condition is needed here only to obtain a lower bound on the log-barrier gradient.

First, we explain that this assumption is a reasonable one to make. Assumption~\ref{ass:reachability} constrains the relevant score directions only near the two undesirable saturated policies. A single-step softmax policy illustrates why these directions need not vanish even when the corresponding outcomes are rare. Consider three possible actions, one for each outcome, with trainable logits $z=(z_S,z_F,z_E)$ and probabilities $p=\operatorname{softmax}(z)$. Since each outcome identifies a single action, its conditional score is
\[
    g_Y=\nabla_z\log p_Y=e_Y-p,
    \qquad Y\in\{S,F,E\},
\]
where $e_Y$ is the corresponding standard basis vector. Near fallback saturation, $p\to e_F$. Therefore,
\[
    g_S=e_S-p\longrightarrow e_S-e_F,
    \qquad \|g_S\|\longrightarrow\sqrt{2}.
\]
Thus, the score conditioned on success remains bounded away from zero even as success becomes arbitrarily unlikely. Similarly, near error saturation, $p\to e_E$. Then, for every $\lambda\in[0,1]$, a convex combination of the success and fallback scores satisfies
\[
    \lambda g_S+(1-\lambda)g_F
    \longrightarrow
    \lambda e_S+(1-\lambda)e_F-e_E.
\]
The limiting squared norm for this is $\lambda^2+(1-\lambda)^2+1$,
whose minimum over $\lambda\in[0,1]$ is $3/2$. Hence the second direction is also bounded away from zero. By
continuity, positive constants $\delta_F$ and $\delta_E$ therefore exist in sufficiently small neighborhoods of the two saturated policies. For a general parameterization, Assumption~\ref{ass:reachability} requires that the logit
Jacobian not annihilate these same directions.

\begin{proposition}[The log-barrier objective retains a non-vanishing signal]
\label{prop:tailrl-nonvanishing}
Suppose Assumptions~\ref{ass:bounded-logit-gradients}
and~\ref{ass:reachability} hold, and let $c\in(0,1)$. For either suboptimal
outcome $o\in\{F,E\}$, as the policy saturates on that outcome, i.e., as
$p_o(x)\to 1$,
\[
    \left\|
        \nabla_\theta \mathcal{J}_{log}(\theta)
    \right\|
    =
    \Theta(1).
\]
\end{proposition}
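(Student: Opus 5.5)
We first write the gradient of $\mathcal J_{log}=(1-c)\log p_S+c\log(1-p_E)$ in terms of the conditional scores, using identity~\eqref{eq:conditional-score-identity}. Since $1-p_E=p_S+p_F$,
\[
\nabla_\theta\mathcal J_{log}
=(1-c)\,g_S+c\,\frac{p_S g_S+p_F g_F}{p_S+p_F}.
\]
The whole argument then works with this expression. The $\log p_S$ term contributes $g_S$ with no prefactor $p_S$. This is exactly the cancellation that removes the $O(1-p_o)$ suppression seen in Proposition~\ref{prop:reinforce-vanishing}. The second term is a convex combination of $g_S$ and $g_F$.

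\textbf{Upper bound.} Both terms are convex combinations of vectors of norm at most $G$ (Lemma~\ref{lem:score-bound}). The weights satisfy $(1-c)+c=1$, so $\|\nabla_\theta\mathcal J_{log}\|\le G$ uniformly in $\theta$. This gives the $O(1)$ half of the $\Theta(1)$ claim in both cases.

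\textbf{Lower bound, case $o=\textsc{Error}$.} Set $\mu=p_S/(p_S+p_F)\in[0,1]$. The gradient equals $\lambda g_S+(1-\lambda)g_F$ with
\[
\lambda=(1-c)+c\mu\in[1-c,1].
\]
Once $p_E\ge 1-\varepsilon$, the second part of Assumption~\ref{ass:reachability} gives $\|\nabla_\theta\mathcal J_{log}\|\ge\delta_E$ directly. This holds whatever the ratio $\mu$ does along the approach to saturation. That is why the assumption is phrased as an infimum over the whole interval $[1-c,1]$ rather than at a single weight.

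\textbf{Lower bound, case $o=\textsc{Fallback}$.} Here we rewrite the gradient as
\[
\nabla_\theta\mathcal J_{log}=(1-c)g_S+c\,\mu g_S+c(1-\mu)g_F,
\qquad \mu=\frac{p_S}{p_S+p_F}.
\]
The idea is to show that $g_F$ is small near fallback saturation. By identity~\eqref{eq:conditional-score-zero}, $p_F g_F=-p_S g_S-p_E g_E$, so
\[
\|g_F\|\le \frac{G(1-p_F)}{p_F}\to 0 .
\]
Also $\mu\le p_S/p_F\le(1-p_F)/p_F$. Therefore
\[
\|\nabla_\theta\mathcal J_{log}\|\ge(1-c)\|g_S\|-O(1-p_F)\ge(1-c)\delta_F-\frac{2cG(1-p_F)}{p_F}.
\]
Once $p_F\ge 1-\varepsilon$ and $1-p_F$ is small enough (shrinking $\varepsilon$ if needed), this is at least $(1-c)\delta_F/2$, using the first part of Assumption~\ref{ass:reachability}. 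Combined with the upper bound, this yields $\Theta(1)$.

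\textbf{Main obstacle.} The delicate step is the fallback case. One must check that the $g_F$ contribution cannot cancel the $g_S$ term. This is settled by the observation from \eqref{eq:conditional-score-zero} that $\|g_F\|=O(1-p_F)$, i.e.\ the dominant outcome's conditional score is automatically small. Without that, one would need an extra non-cancellation assumption analogous to the error case. A minor technical point is that Assumption~\ref{ass:reachability} only holds for $p_o\ge1-\varepsilon$. The statement is therefore interpreted as a bound holding for all $\theta$ in a neighborhood of saturation, with constants $(1-c)\delta_F/2$, $\delta_E$ and $G$.
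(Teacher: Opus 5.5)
Your proof is correct and has the same overall structure as the paper's. The $\log p_S$ term leaves $(1-c)g_S$ with no $p_S$ prefactor. Your error case matches the paper's exactly: you write the gradient as $\widetilde\lambda g_S+(1-\widetilde\lambda)g_F$ with $\widetilde\lambda\in[1-c,1]$ and invoke the second half of Assumption~\ref{ass:reachability}.

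You differ in using this single convex-combination form for both cases. That buys you a clean uniform upper bound $\|\nabla_\theta\mathcal J_{log}\|\le G$, whereas the paper proves $(1-c)G+cG\,p_E/(1-p_E)$ separately in the fallback case. The cost is that it makes the fallback lower bound look harder than it is. The paper keeps the unrewritten form $(1-c)g_S-c\frac{p_E}{1-p_E}g_E$ there. By the zero-sum identity, your perturbation $c\mu g_S+c(1-\mu)g_F$ is exactly $-c\frac{p_E}{1-p_E}g_E$. Its norm is therefore at most $cG\,p_E/(1-p_E)\le cG(1-p_F)/p_F$ immediately. This needs neither of your auxiliary estimates $\|g_F\|\le G(1-p_F)/p_F$ and $\mu\le(1-p_F)/p_F$, and it gives constant $cG$ instead of your $2cG$.

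So the ``main obstacle'' you flag, possible cancellation of $g_S$ by $g_F$, comes from your choice of representation and needs no extra idea. Your bound on $g_F$ via the zero-sum identity is valid; it is simply a detour. You could also drop the $c\mu\|g_S\|$ subtraction, since $((1-c)+c\mu)g_S$ already has norm at least $(1-c)\|g_S\|$.
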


\begin{proof}

We have 
\begin{equation}
\begin{aligned}
    \mathcal{J}_{log} 
    &= (1-c)\log p_s + c \log(1-p_E)\\
    \Rightarrow \nabla_\theta \mathcal{J}_{log}&= (1-c)g_S-c\frac{p_E}{1-p_E}g_E.
\end{aligned}
    \label{eq:tailrl-gradient}
\end{equation}

We consider the two saturated outcomes separately. First, suppose
$p_F\to1$. The second expression in~\eqref{eq:tailrl-gradient} shows that the
logarithm has cancelled the factor $p_S$, leaving the success direction with a
constant coefficient. For $p_F$ sufficiently close to one, using the reverse triangle inequality,
\begin{align}
    \left\|
        \nabla_\theta \mathcal{J}_{log}
    \right\|
    &\geq
    (1-c)\|g_S\|
    -
    c\frac{p_E}{1-p_E}\|g_E\|
    \nonumber\\
    &\geq
    (1-c)\delta_F
    -
    cG\frac{p_E}{1-p_E},
    \label{eq:tail-gradient-lower}
\end{align}
but also by the triangle inequality, 
\begin{align}
    \left\|
        \nabla_\theta \mathcal{J}_{log}
    \right\|
    &\leq
    (1-c)G
    +
    cG\frac{p_E}{1-p_E}.
    \label{eq:tail-gradient-upper}
\end{align}
Since $p_E/(1-p_E)\to0$ when $p_F\rightarrow1$ these bounds imply
\[
    \liminf_{p_F\to1}\|\nabla_\theta\mathcal J_{\log}\| \ \ge\ (1-c)\,\delta_F \ >\ 0,
\qquad
\limsup_{p_F\to1}\|\nabla_\theta\mathcal J_{\log}\| \ \le\ (1-c)\,G.
\]

Now, for the case when error is saturating, using identity \ref{eq:conditional-score-zero}, we get,
\begin{align}
\nabla_\theta \mathcal{J}_{log} 
&= (1-c)g_S -c\frac{p_E}{1-p_E}g_E\\
&= (1-c)g_S +c\frac{p_Sg_S + p_Fg_F}{p_S+p_F}\\
&= (1-c+c\frac{p_S}{p_S+p_F})g_S + c\frac{p_F}{p_F+p_S}g_F
\end{align}

 To simplifying some notation, we define:
\[
    \lambda_\theta=\frac{p_S}{p_S+p_F}\in[0,1],
    \qquad
    \widetilde\lambda_\theta=(1-c)+c\lambda_\theta\in[1-c,1].
\]

The first expression in~\eqref{eq:tailrl-gradient} becomes
\[
    \nabla_\theta \mathcal{J}_{log}
    =\widetilde\lambda_\theta g_S
    +(1-\widetilde\lambda_\theta)g_F.
\]

For $p_E$ sufficiently close to one, Assumption~\ref{ass:reachability} and
Lemma~\ref{lem:score-bound} therefore give
\[
    \delta_E
    \le
    \left\|\nabla_\theta \mathcal{J}_{log}\right\|
    \le
    \widetilde\lambda_\theta\|g_S\|
    +(1-\widetilde\lambda_\theta)\|g_F\|
    \le G.
\]
Thus the gradient is bounded above and away from zero near either suboptimal
saturated outcome. Hence, for $o\in\{F,E\}$,
\[
    \boxed{
    \left\|
        \nabla_\theta \mathcal{J}_{log}
    \right\|
    =
    \Theta(1)
    \quad\text{as }p_o\to1.
    }
\]
\end{proof}

\clearpage
\section{Effects of the \texorpdfstring{$\eps$}{epsilon} parameter}
\label{app:eps_theory}

The reciprocal weights in Algorithm~\ref{alg:graceful-rl} can have high variance when success or non-error trajectories are rare. At fixed $N$, small $\epsilon$ keeps the expected update close to the unsmoothed finite-rollout target; as $N$ grows, its rescaled gradient approaches $\nabla_\theta\mathcal J_{\log}$. Larger $\epsilon$ shrinks and equalizes the denominators, reducing variance while introducing bias relative to the $\epsilon=0$ update.

First, we prove that the bias is linear in the smoothing factor $\eps$. Let us define $\hat{g}(\eps)$ to be the stochastic gradient obtained from using Algorithm \ref{alg:graceful-rl}. Let us define its expected value as $g(\eps)$. Then, we show the following:

\begin{proposition} The bias introduced by smoothing, is linear in $\eps$
\label{prop:linear_bias}
\begin{align}
    \|g(\eps) - g(0)\| = O(\eps)
\end{align}
\end{proposition}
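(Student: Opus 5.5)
We write $\hat g(\eps)$ as a finite-sample expression. The dependence on $\eps$ enters only through the two reciprocal weights in Algorithm~\ref{alg:graceful-rl}. Conditioned on the outcome labels $(f_1,\dots,f_N)$, each advantage has the form
\[
a_n(\eps)=\frac{(1-c)\mathbf 1\{f_n=\textsc{Success}\}}{N_S+\eps}+\frac{c\,\mathbf 1\{f_n\neq\textsc{Error}\}}{N_S+N_F+\eps}.
\]
Both $\hat g(\eps)$ and $\hat g(0)$ use the same samples $y_{1:N}$. Their difference is therefore
\[
\hat g(\eps)-\hat g(0)=N^{-1}\sum_n\bigl((a_n(\eps)-a_n(0))-(\bar a(\eps)-\bar a(0))\bigr)\nabla_\theta\log\pi_\theta(y_n\mid x).
\]
We adopt the convention that the terms with $N_S=0$ (respectively $N_S+N_F=0$) vanish identically for every $\eps\ge0$, because their indicators are zero. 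This makes $g(0)$ well defined.

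\textbf{Step 1: bound the perturbation of each weight.} Whenever $N_S\ge1$, the success indicator is nonzero only in that case, and
\[
\Bigl|\frac{1}{N_S+\eps}-\frac1{N_S}\Bigr|=\frac{\eps}{N_S(N_S+\eps)}\le \eps .
\]
The same bound holds for $1/(N_S+N_F+\eps)$ when $N_S+N_F\ge1$. Hence $|a_n(\eps)-a_n(0)|\le \eps$ for every $n$, and also $|\bar a(\eps)-\bar a(0)|\le\eps$. Every centered coefficient therefore changes by at most $2\eps$, deterministically and uniformly over all outcome configurations.

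\textbf{Step 2: bound the score terms and take expectations.} By the argument of Lemma~\ref{lem:score-bound}, under Assumption~\ref{ass:bounded-logit-gradients} every trajectory score satisfies $\|\nabla_\theta\log\pi_\theta(y_n\mid x)\|\le G=2HB$. The triangle inequality then gives $\|\hat g(\eps)-\hat g(0)\|\le 2G\eps$ almost surely, for each prompt. Applying Jensen's inequality to the norm yields $\|g(\eps)-g(0)\|\le \mathbb E\|\hat g(\eps)-\hat g(0)\|\le 2G\eps$. Averaging over the batch $B$ preserves the bound, so $\|g(\eps)-g(0)\|=O(\eps)$ with a constant $2G$ independent of $N$ and $\theta$.

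\textbf{Main obstacle.} The only delicate point is the degenerate events $N_S=0$ or $N_S+N_F=0$. There the $\eps=0$ weights are formally $0/0$, and a naive derivative bound on $1/(N+\eps)$ would blow up. We handle this through the convention above: the indicator numerator kills the term whenever the denominator would be zero. The perturbation bound is then needed only on the events $N_S\ge1$ and $N_S+N_F\ge1$, where denominators are at least $1$. With the convention fixed, the rest is a direct Lipschitz-in-$\eps$ computation.
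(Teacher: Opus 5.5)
Your proof is correct and follows essentially the paper's argument. The paper also uses the identity $\frac{1}{m+\eps}-\frac{1}{m}=-\frac{\eps}{m(m+\eps)}$ with the same $0/0$ convention for empty counts, and then the score bound $\|s_i\|\le G$ with the triangle inequality. The only difference is that the paper conditions on the count configuration $(N_S,N_F,N_E)$ and sums over the finitely many configurations, whereas you bound $\|\hat g(\eps)-\hat g(0)\|\le 2G\eps$ pathwise and then apply Jensen, which gives an explicit constant independent of $N$ and $\theta$.
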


First, we prove a lemma.

\begin{lemma}
Let $N_S,N_F,N_E$ denote the numbers of \textsc{success}, \textsc{fallback}, and \textsc{error} samples among $N$ rollouts, where $N=N_S+N_F+N_E$. Let $a_{i,\epsilon}$ denote the centered advantage used by Algorithm~\ref{alg:graceful-rl} with smoothing parameter $\epsilon\geq 0$. Then, for every rollout $i$,
\[
a_{i,\epsilon}-a_{i,0}=O(\epsilon)
\qquad\text{as }\epsilon\to 0.
\]
Note that $a_{i,\eps}$ is a random variable where the randomness is coming from the rollouts sampled from the policy, conditioned on $N_S, N_F,$ and $N_E$.
\end{lemma}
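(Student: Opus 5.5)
Conditioned on the counts $(N_S,N_F,N_E)$, the centered advantage $a_{i,\epsilon}$ is a deterministic function of $\epsilon$ and of the outcome label of rollout $i$; the sampled trajectories enter only through these labels. The plan is therefore to write $a_{i,\epsilon}$ explicitly as a finite sum of rational functions of $\epsilon$, and to show that each is Lipschitz in $\epsilon$ near $0$.

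From Algorithm~\ref{alg:graceful-rl}, the raw weight is
\[
w_{i,\epsilon}=\frac{(1-c)\mathbf 1\{f_i=\textsc{Success}\}}{N_S+\epsilon}+\frac{c\,\mathbf 1\{f_i\neq\textsc{Error}\}}{N_S+N_F+\epsilon},
\]
and $a_{i,\epsilon}=w_{i,\epsilon}-\bar w_\epsilon$ with $\bar w_\epsilon=N^{-1}\sum_n w_{n,\epsilon}$. Summing over $n$ gives the closed form
\[
\bar w_\epsilon=\frac1N\Big(\frac{(1-c)N_S}{N_S+\epsilon}+\frac{c(N_S+N_F)}{N_S+N_F+\epsilon}\Big).
\]
Each term of $w_{i,\epsilon}$ and of $\bar w_\epsilon$ has the form $\alpha/(M+\epsilon)$ or $\alpha M/(M+\epsilon)$, with $M\in\{N_S,N_S+N_F\}$ a nonnegative integer.

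The case split is on whether $M\ge 1$ or $M=0$. If $M\ge 1$, the elementary identity
\[
\frac{1}{M+\epsilon}-\frac1M=-\frac{\epsilon}{M(M+\epsilon)}
\]
bounds the change in absolute value by $\epsilon/M^2\le\epsilon$. Likewise,
\[
\frac{M}{M+\epsilon}-1=-\frac{\epsilon}{M+\epsilon},
\]
which is bounded by $\epsilon/M\le\epsilon$. If $M=0$, every corresponding indicator vanishes: for instance, $N_S=0$ forces $\mathbf 1\{f_i=\textsc{Success}\}=0$. The term is then identically zero for every $\epsilon\ge0$, with the convention $0/0=0$ at $\epsilon=0$, matching the unsmoothed algorithm's treatment of empty groups.

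Combining the cases with the triangle inequality gives
\[
|a_{i,\epsilon}-a_{i,0}|\le 2\big((1-c)+c\big)\epsilon=2\epsilon,
\]
uniformly in $i$, $N$ and the counts. This uniform constant is what the subsequent Proposition~\ref{prop:linear_bias} will need, together with Lemma~\ref{lem:score-bound}, to bound $\|g(\epsilon)-g(0)\|\le 2\epsilon G$.

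The main obstacle is the degenerate case $M=0$. Here the $\epsilon=0$ advantage is formally $0/0$, so the statement is only meaningful under the convention that empty groups contribute zero. I would state that convention explicitly and check that it is consistent with the indicators; everything else is routine algebra.
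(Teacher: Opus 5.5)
Your proposal is correct and follows essentially the same route as the paper. Both write the advantage as a finite sum of rational functions of $\epsilon$, control each nonzero term via $\frac{1}{m+\epsilon}-\frac{1}{m}=-\frac{\epsilon}{m(m+\epsilon)}$ for $m\ge 1$, and handle empty groups with the convention $0/0=0$; the only difference is that you bound the raw weight and its mean separately, giving the explicit constant $|a_{i,\epsilon}-a_{i,0}|\le 2\epsilon$ uniformly in the counts, whereas the paper writes out the three centered cases directly and gets $O(\epsilon)$ per count configuration (later relying on there being finitely many configurations).
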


\begin{proof}
The centered advantage takes one of three forms depending on the outcome of
rollout $i$:
\[
a_{i,\epsilon}
=
\begin{cases}
\displaystyle
\frac{(1-c)(N_F+N_E)}
     {N(N_S+\epsilon)}
+
\frac{cN_E}
     {N(N_S+N_F+\epsilon)},
& o_i=\textsc{success},
\\[1.2em]
\displaystyle
-\frac{(1-c)N_S}
      {N(N_S+\epsilon)}
+
\frac{cN_E}
     {N(N_S+N_F+\epsilon)},
& o_i=\textsc{fallback},
\\[1.2em]
\displaystyle
-\frac{(1-c)N_S}
      {N(N_S+\epsilon)}
-
\frac{c(N_S+N_F)}
      {N(N_S+N_F+\epsilon)},
& o_i=\textsc{error}.
\end{cases}
\]

Note that in the ratios above, a ratio is understood as 0 when it is of the $0/0$ form. 

Then, for any integer $m \ge 1$, $
\frac{1}{m+\epsilon} - \frac{1}{m}
=
-\frac{\epsilon}{m(m+\epsilon)}
=
O(\epsilon).
$ Therefore, every nonzero term appearing in $a_{i,\epsilon}$ changes by $O(\epsilon)$ when $\epsilon$ is perturbed from $0$ to $\epsilon$. Since each centered advantage contains only finitely many such terms, $a_{i,\epsilon} - a_{i,0} = O(\epsilon)$.

Zero-count cases can also be handled this way. If $N_S=0$, all terms proportional to $N_S/(N_S+\epsilon)$ vanish. Likewise, if $N_S+N_F=0$, all terms proportional to $(N_S+N_F)/(N_S+N_F+\epsilon)$ vanish. Thus, for a fixed count configuration $(N_S, N_F, N_E)$, we have $a_{i,\epsilon} - a_{i,0} = O(\epsilon)$.
\end{proof}

\begin{proof}
Now, we prove Proposition \ref{prop:linear_bias}. First let us define the following conditional expectation:
\[
g_\eps(N_S, N_F, N_E) := \mathbb{E}\left[
\frac{1}{N}\sum_{i=1}^N
a_{i,\epsilon}s_i
\;\middle|\;N_S, N_F, N_E\right].
\]
From Assumption~\ref{ass:bounded-logit-gradients}, we have that $\|s_i\|\leq G,$, then for any fixed configuration of counts $(N_S, N_F, N_E)$ we obtain:
\begin{align*}
\|g_\epsilon(N_S, N_F, N_E)-g_0(N_S, N_F, N_E)\|
&\leq
\mathbb{E}\left[
\frac{1}{N}\sum_{i=1}^N
\left|a_{i,\epsilon}-a_{i,0}\right|
\|s_i\|\;\middle|\; N_S, N_F, N_E
\right] \\
&\leq
G\cdot
\mathbb{E}\left[
\frac{1}{N}\sum_{i=1}^N
\left|a_{i,\epsilon}-a_{i,0}\right|\;\middle|\;N_S, N_F, N_E
\right]
=O(\epsilon).
\end{align*}

Note that for fixed $N$, there are only finitely many possible count configurations
$(N_S,N_F,N_E)$. Therefore, by the law of total expectation,

\begin{align}
\|g_\epsilon-g_0\|
&=
\left\|\sum_{\substack{N_S,N_F,N_E\\N_S+N_F+N_E=N}}
\mathbb{P}(N_S,N_F,N_E)\,
[
g_\epsilon(N_S,N_F,N_E)-g_0(N_S,N_F,N_E)]\right\|\\
&\leq \sum_{\substack{N_S,N_F,N_E\\N_S+N_F+N_E=N}}\mathbb{P}(N_S,N_F,N_E)\left\|g_\epsilon(N_S,N_F,N_E)-g_0(N_S,N_F,N_E)\right\|\\
&= O(\epsilon).
\end{align}

where the last step follows because each conditional term is
$O(\epsilon)$ and there are only finitely many count configurations. Thus, the smoothing parameter \(\epsilon\) introduces only an \(O(\epsilon)\) bias in the expected gradient.
\end{proof}

If $V(\eps)$ denotes the variance of the estimator as a function of $\eps$ and $V'(0^+)<0$, then for sufficiently small $\eps>0$,
\[
V(\eps)=V(0)+V'(0^+)\eps+o(\eps)<V(0).
\]

Since Proposition~\ref{prop:linear_bias} gives $|g(\eps)-g(0)|=\mathcal{O}(\eps)$, the squared bias is only $\mathcal{O}(\eps^2)$. Therefore, for sufficiently small $\eps>0$, the linear decrease in variance dominates the quadratic increase in squared bias, yielding lower MSE. Empirically, we also find that using a small nonzero $\eps$ improves performance when training using smaller amounts of compute budget.

\clearpage
\section{SafeMaze Extended Results}
\label{app:safemaze}

\subsection{\textsc{SafeMaze} details}
For the \textsc{SafeMaze} environment, we generate a $15\times 15$ grid and pretrain the policy using a mixture of \textsc{Goal}-seeking and \textsc{Safe}-square-seeking trajectories. We ensure that the \textsc{Goal} is always reachable in the grid within the time limit.

\subsection{RLOO with a Fixed Fallback Reward}
\label{app:safemaze-fixed-rloo}

Figure~\ref{fig:safemaze-fixed-rloo} shows why selecting a fixed reward for fallback when using policy gradient is difficult. With $c=0.1$, the fallback probability tends to zero, so the policy continues attempting the goal but incurs a high error rate. At $c=0.2$, close to the policy's initial pass rate, RLOO retains nonzero success, but its performance plateaus while fallback remains the dominant outcome. For every tested reward $c\geq0.3$, the policy instead converges rapidly to the fallback-only solution, with $p_F\to1$ and $p_S\to0$. Thus, a fixed reward gives useful behavior only in a narrow range and does not reliably support continued improvement. This motivates the online reward-tuning baseline, which sets the fallback reward relative to the policy's current success rate rather than choosing a single value for the entire run.

\begin{figure}[t]
    \centering
    \includegraphics[width=0.85\textwidth]{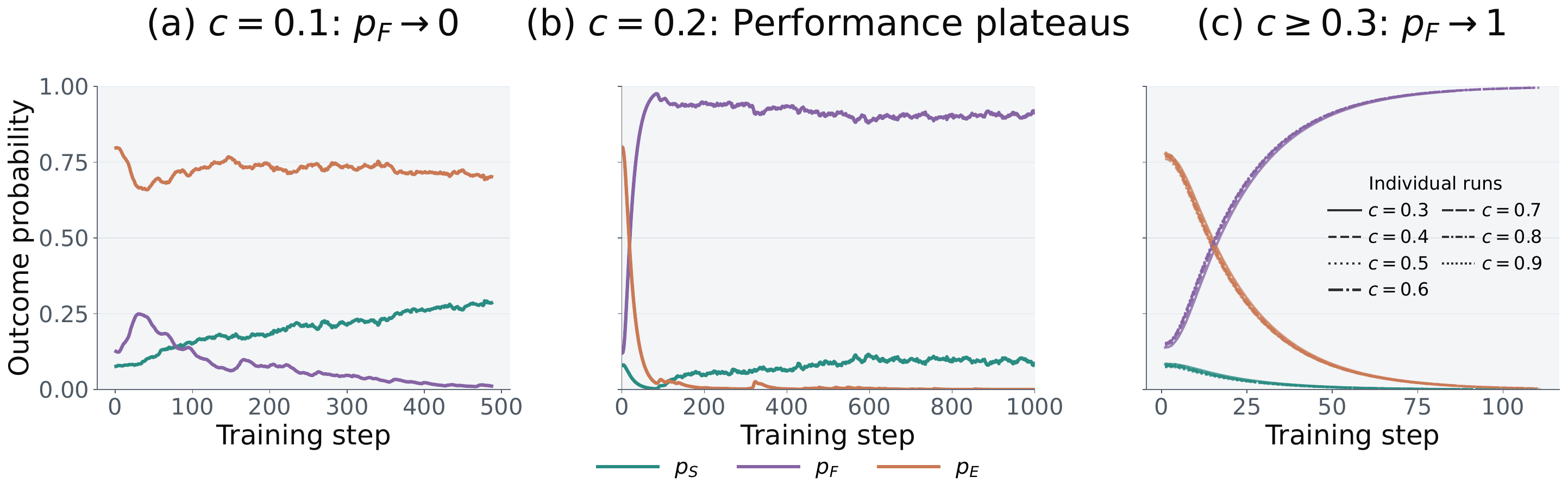}
    \caption{\textbf{RLOO with fixed fallback rewards on SafeMaze.} We measure $p_S, p_F, p_E$ during training for $c=0.1$, $c=0.2$, and $c\geq0.3$. Empirically, we observe that when $c \ll p_S$, the policy can lead to no coverage on fallback trajectories, that is, $p_F\rightarrow 0$. Similarly, when $c \gg p_S$, the policy can lead to a degenerate policy where it always goes to the \textsc{Safe} square.}
    \label{fig:safemaze-fixed-rloo}
\end{figure}

\subsection{Hyperparameters for the Main SafeMaze Comparison}
\label{app:safemaze-main-hyperparameters}

For every run in Figure~\ref{fig:safemaze-summary} \textbf{(c)}, we use $32$ environments and $64$ rollouts and train for $20{,}000$ optimization steps. For our method, we sweep the fallback reward $c\in\{0,0.2,0.4,0.6,0.8\}$. For DPG, we set $\eta=0.9$ and use the same safe-reward sweep. For the online reward-tuning baseline, we sweep the multiplier $\alpha\in\{1,0.8,0.6,0.4,0.2,0\}$ in $c_t=\alpha\widehat p_{S,t}$.

\subsection{Varying Rollout Counts and Safe Rewards}
\label{app:safemaze-rollouts}

We jointly sweep the number of rollouts per update $N\in\{16,32,64\}$ and the reward for reaching \textsc{Safe}, $c\in\{0,0.2,0.4,0.6,0.8\}$. Figure~\ref{fig:safemaze-ablations}\textbf{(a)} plots the resulting reliability frontiers after $5{,}000$ training steps. We observe that increasing $N$ improves the reliability frontier. This can be explained by the fact that larger rollout counts provide a higher-order truncation of the log objective and are more likely to include rare successful trajectories, producing a more informative update.

The effect of rollout count becomes more pronounced as the fallback reward increases. At $c=0.8$, the $16$- and $32$-rollout runs collapse to the fallback-only solution, whereas $64$ rollouts retain nonzero success. Thus, settings that place greater value on reliability also require more rollouts to approximate a higher order approximation of the objective.

\subsection{Smoothing-Parameter \texorpdfstring{$\epsilon$}{epsilon} effects}
\label{app:safemaze-epsilon}

As discussed in Section~\ref{app:eps_theory}, $\eps$ smoothing can lower variance at the cost of some bias. Figure~\ref{fig:safemaze-ablations}\textbf{(b)} shows the corresponding optimization tradeoff: larger smoothing factors improve reward earlier in training, while smaller values become preferable with more compute.

\begin{figure}[t]
    \centering
    \includegraphics[width=0.9\textwidth]{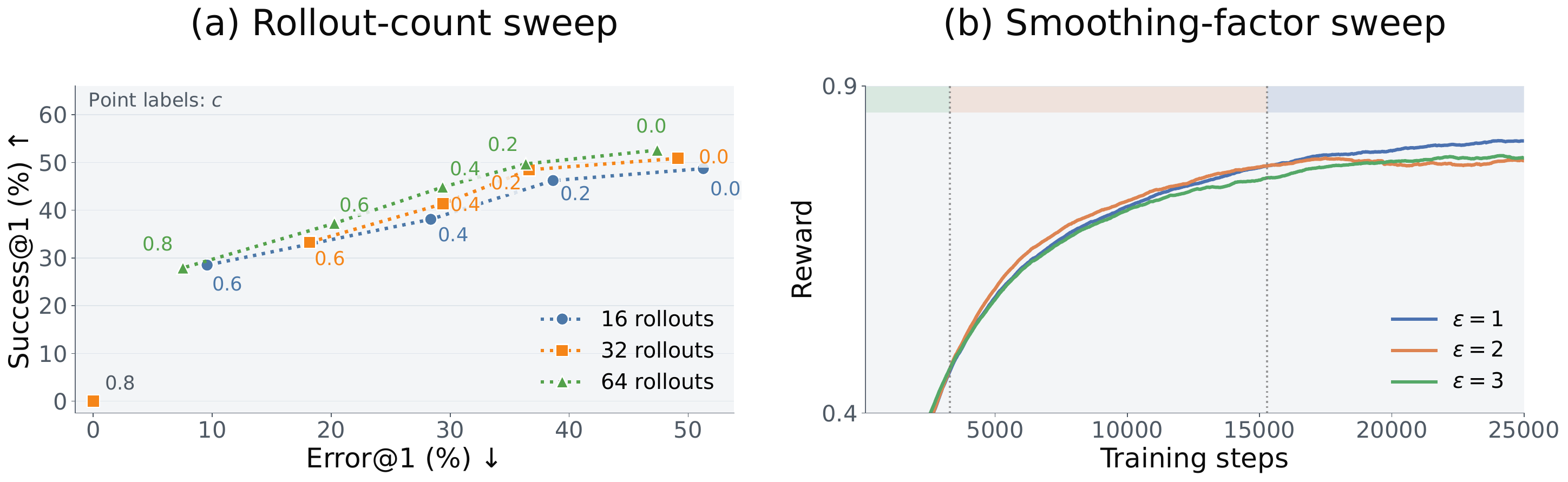}
    \caption{\textbf{SafeMaze rollout and smoothing sweeps.} \textbf{(a)} Reliability frontiers after $5{,}000$ steps for $16$, $32$, and $64$ rollouts. \textbf{(b)} Training reward with $32$ rollouts and $c=0.4$: larger $\epsilon$ helps initially, whereas $\epsilon=1$ performs best later. The colored bands mark the highest-reward curve in each training interval pointing to the general principle: when bounded by compute, larger $\eps$ helps in achieveing higher reward.}
    \label{fig:safemaze-ablations}
\end{figure}

\begin{figure}[h]
    \centering
    \includegraphics[width=0.9\textwidth]{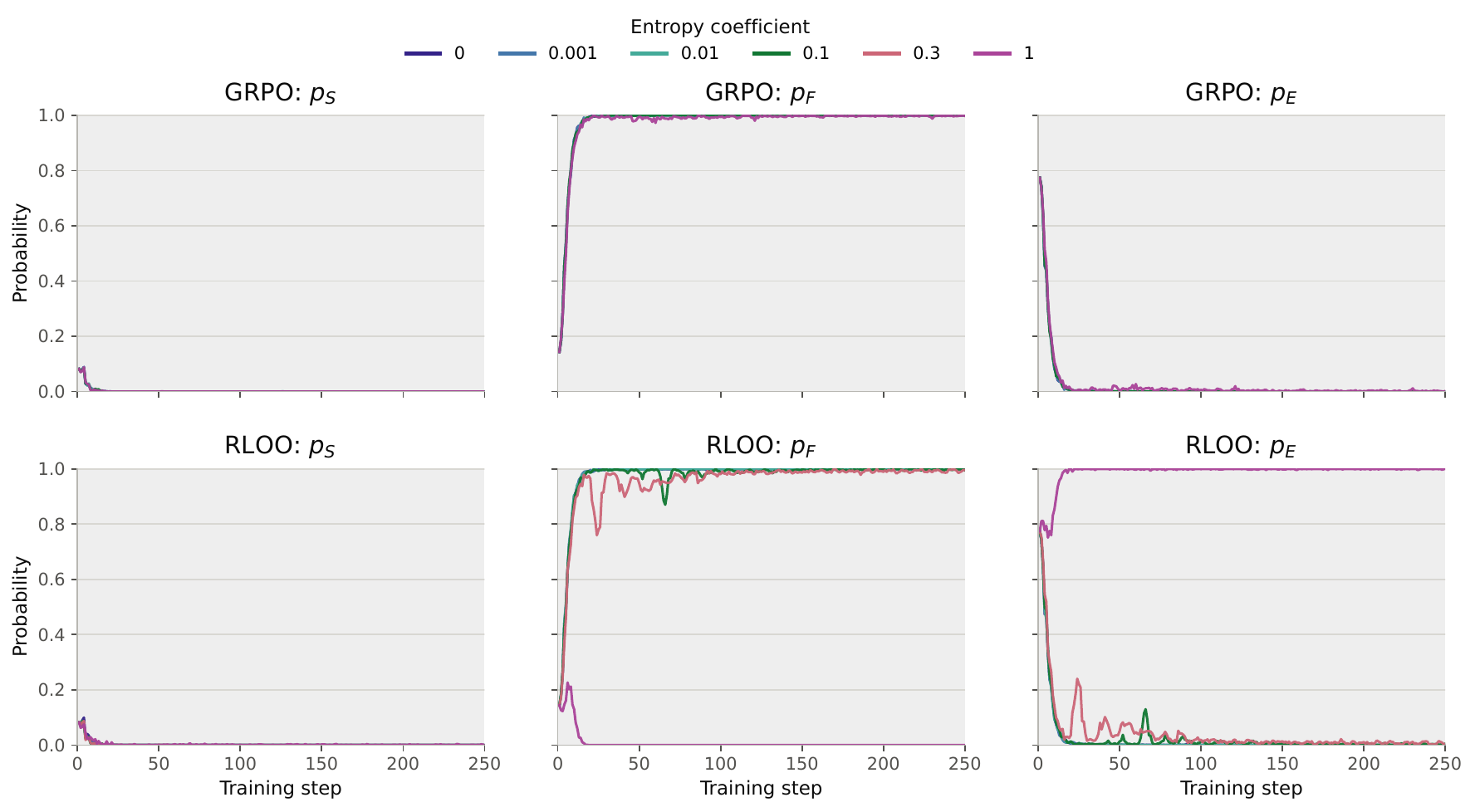}
    \caption{Entropy-bonus sweep for GRPO (top) and RLOO (bottom) on
    \textsc{SafeMaze}. Columns show the probabilities of success ($p_S$),
    fallback ($p_F$), and error ($p_E$) during training. None of the tested
    entropy coefficients preserves task success: GRPO and most RLOO runs
    collapse to fallback, while RLOO with coefficient $1$ collapses to error.}
    \label{fig:safemaze-entropy-sweep}
\end{figure}

\subsection{Entropy-Bonus Tuning}
\label{app:safemaze-entropy}

We also sweep the entropy-bonus strength for GRPO and RLOO while fixing the safe reward to $c=0.4$ and holding everything else fixed. Entropy regularization does not yield a meaningful policy in this regime.  Figure~\ref{fig:safemaze-entropy-sweep} shows that every setting drives $p_S$ to zero within a few optimization steps. GRPO converges to the fallback-only policy across the entire sweep. RLOO does the same for entropy coefficients up to $0.3$, while the largest coefficient, $1$, instead converges to $p_E = 1$. The entropy bonus encourages stochasticity over actions, but it still suffers from the vanishing gradient problem near a saturated policy \citep{JMLR:v22:19-736,pmlr-v97-ahmed19a}.

\clearpage
\section{Mathematical-Reasoning Training Details and Validation}
\label{app:math-training}

\subsection{Training Hyperparameters}
\label{app:math-hyperparameters}

For training, we use the verl \citep{sheng2024hybridflow, 10.1145/3689031.3696075} framework. Tables \ref{tab:math-training-hparams-4B} and~\ref{tab:math-training-hparams-1.7B} give the complete training configuration for the comparison of Qwen3-4B-Base and Qwen3-1.7B-Base. We match all shared optimization and sampling hyperparameters across methods. RLCR uses RLOO with the calibration reward defined in Section~\ref{sec:math-reasoning}, whereas our method uses the estimator in Algorithm \ref{alg:graceful-rl} with $\epsilon=1$ and jointly trains on the five conditioning levels.

\begin{table}[H]
\centering

\small
\begin{tabularx}{\textwidth}{@{}lXX@{}}
\toprule
\textbf{Hyperparameter} & \textbf{RLCR} & \textbf{Ours} \\
\midrule
Gradient estimator & RLOO & GracefulRL \\
Training data & DAPO-Math-17K & DAPO-Math-17K \\
Fallback utilities $c$ & -- & $\{0,0.2,0.4,0.6,0.8\}$ \\
Smoothing $\epsilon$ & -- & $1$ \\
Optimizer & AdamW & AdamW \\
Learning rate / schedule & $10^{-6}$ / constant & $10^{-6}$ / constant \\
Warmup / weight decay & none / $0.01$ & none / $0.01$ \\
Prompts $\times$ rollouts per update & $32\times32$ & $32\times32$ \\
Maximum prompt / response length & $2{,}048$ / $4{,}096$ & $2{,}048$ / $4{,}096$ \\
Rollout temperature / top-$p$ & $1.0$ / $1.0$ & $1.0$ / $1.0$ \\
Policy epochs / minibatch size & $1$ / $32$ & $1$ / $32$ \\
Microbatch size per GPU & $2$ & $2$ \\
Policy clip ratio / gradient clipping & $0.2$ / $1$ & $0.2$ / $1$ \\
Entropy coefficient / KL loss & $0$ / disabled & $0$ / disabled \\
Loss aggregation & seq-mean-token-sum-norm & seq-mean-token-sum-norm \\
Precision / sharding & bfloat16 / FSDP & bfloat16 / FSDP \\
Training hardware & $2\times8$ H100GPUs & $2\times8$ H100GPUs \\
Training steps / checkpoint interval & $1{,}000$ / $200$ & $1{,}000$ / $200$ \\
Validation interval & $50$ steps & $50$ steps \\
Validation rollouts / temperature / top-$p$ & $32$ / $0.6$ / $0.95$ & $32$ / $0.6$ / $0.95$ \\
\bottomrule
\end{tabularx}
\caption{Training hyperparameters for the Qwen3-4B-Base RLCR and \textsc{GracefulRL} runs. A dash denotes a method-specific parameter that does not apply.
}
\label{tab:math-training-hparams-4B}
\end{table}

\begin{table}[H]
\centering
\small
\begin{tabularx}{\textwidth}{@{}lXX@{}}
\toprule
\textbf{Hyperparameter} & \textbf{RLCR} & \textbf{Ours} \\
\midrule
Gradient estimator & RLOO & GracefulRL \\
Training data & DAPO-Math-17K & DAPO-Math-17K \\
Fallback utilities $c$ & -- & $\{0,0.2,0.4,0.6,0.8\}$ \\
Smoothing $\epsilon$ & -- & $1$ \\
Optimizer & AdamW & AdamW \\
Learning rate / schedule & $10^{-6}$ / constant & $10^{-6}$ / constant \\
Warmup / weight decay & none / $0.01$ & none / $0.01$ \\
Prompts $\times$ rollouts per update & $32\times32$ & $32\times32$ \\
Maximum prompt / response length & $2{,}048$ / $4{,}096$ & $2{,}048$ / $4{,}096$ \\
Rollout temperature / top-$p$ & $1.0$ / $1.0$ & $1.0$ / $1.0$ \\
Policy epochs / minibatch size & $1$ / $32$ & $1$ / $32$ \\
Microbatch size per GPU & $2$ & $2$ \\
Policy clip ratio / gradient clipping & $0.2$ / $1$ & $0.2$ / $1$ \\
Entropy coefficient / KL loss & $0$ / disabled & $0$ / disabled \\
Loss aggregation & seq-mean-token-sum-norm & seq-mean-token-sum-norm \\
Precision / sharding & bfloat16 / FSDP & bfloat16 / FSDP \\
Training hardware & $1\times8$ H100GPUs & $1\times8$ H100GPUs \\
Training steps / checkpoint interval & $1{,}500$ / $300$ & $1{,}500$ / $300$ \\
Validation interval & $50$ steps & $50$ steps \\
Validation rollouts / temperature / top-$p$ & $32$ / $0.6$ / $0.95$ & $32$ / $0.6$ / $0.95$ \\
\bottomrule
\end{tabularx}
\caption{Training hyperparameters for the Qwen3-1.7B-Base RLCR and \textsc{GracefulRL} runs. A dash denotes a method-specific parameter that does not apply.
}
\label{tab:math-training-hparams-1.7B}
\end{table}
\subsubsection{RLCR Training Prompt}
\label{app:rlcr-prompt}

For the RLCR baseline, we replace \texttt{\{problem\}} with the mathematical question and use the following prompt verbatim:

\begin{minesweeperpromptbox}{RLCR prompt}
{problem}
Please reason step by step, and put your final answer in the following format:
<answer>\boxed{<final answer>}</answer>
Then, reason about how confident you are in your answer using the following scale:
- 0.1: You are guessing or have very little idea
- 0.2: You are quite unsure, the approach might be wrong
- 0.3: You are somewhat unsure, there are likely errors
- 0.4: You are slightly unsure, some steps feel shaky
- 0.5: You could go either way
- 0.6: You are slightly confident, the approach seems right
- 0.7: You are fairly confident, most steps check out
- 0.8: You are quite confident, only minor doubt remains
- 0.9: You are very confident and have verified key steps
- 1.0: You are certain the answer is correct
Report your confidence as:
<confidence>\boxed{<your score>}</confidence>
\end{minesweeperpromptbox}
Unlike the RLCR paper, we find that Qwen3 models can learn to follow this instruction after a couple of training steps and there is no need for extra SFT data to warm start the instruction following.

\subsubsection{GracefulRL Conditioning Suffixes}
\label{app:conditioning-suffixes}

We append one of the following suffixes to each mathematical problem and use the corresponding fallback utility $c$ when computing its training reward. All five variants are mixed during training.

\begin{table}[H]
\centering
\small
\begin{tabularx}{\textwidth}{@{}cX@{}}
\toprule
$c$ & \textbf{Conditioning suffix} \\
\midrule
$0.0$ & ``Be very optimistic and always attempt an answer, even if you are unsure.'' \\
$0.2$ & ``Be optimistic but stay mindful of potential mistakes.'' \\
$0.4$ & ``Be balanced, leaning towards attempting an answer.'' \\
$0.6$ & ``Be moderately careful and verify your approach before committing to an answer.'' \\
$0.8$ & ``Be quite careful and avoid guessing.'' \\
\bottomrule
\end{tabularx}
\caption{Conditioning suffixes for the five GracefulRL fallback utilities.}
\label{tab:conditioning-suffixes}
\end{table}

\begin{figure}[h]
    \centering
    \includegraphics[width=0.6\textwidth]{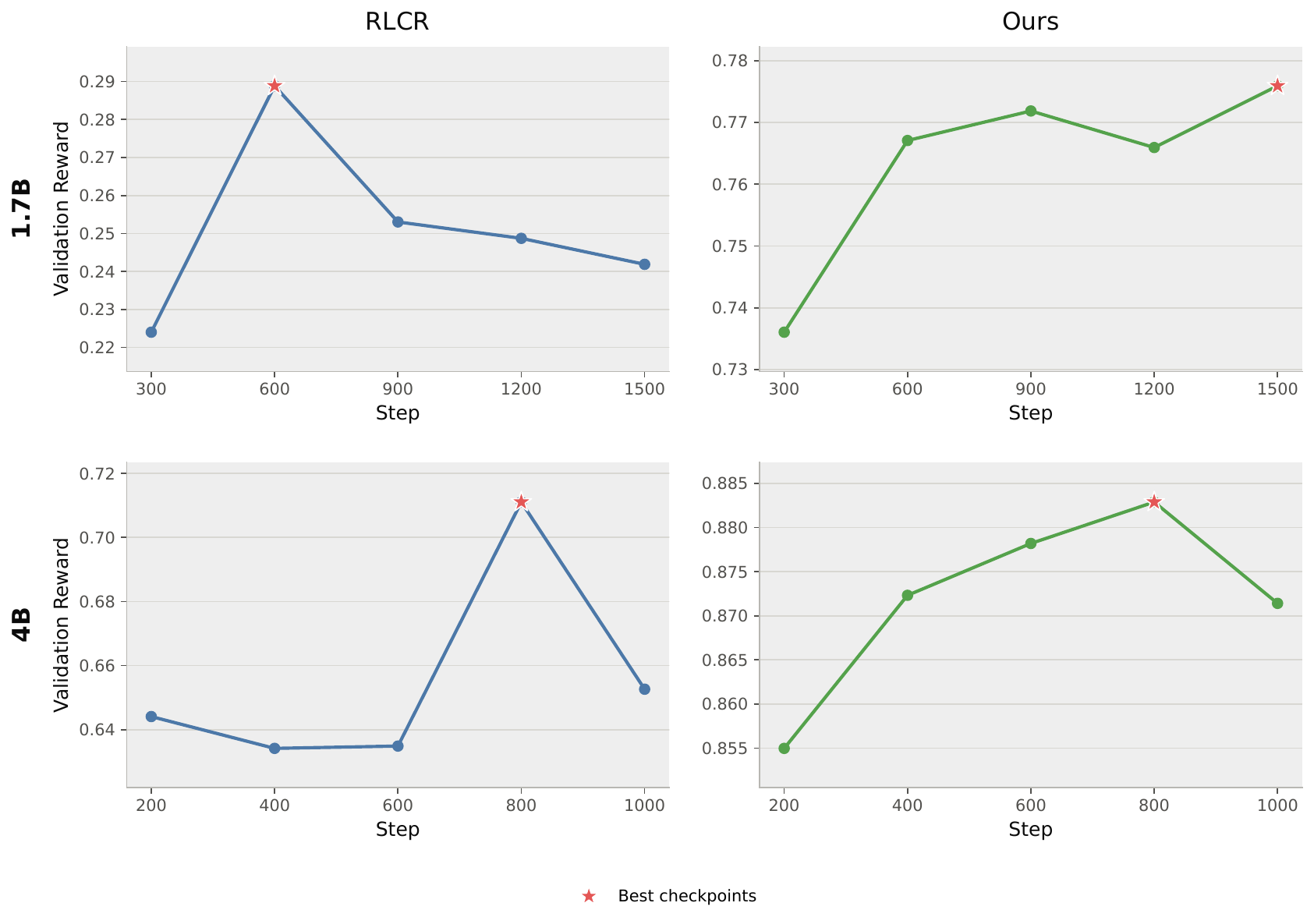}
    \caption{MATH500 validation reward across training checkpoints for RLCR
    (left) and our conditioned \textsc{GracefulRL} method (right), at the 1.7B (top) and 4B
    (bottom) model scales. Stars mark the checkpoints selected for the final
    evaluation.}
    \label{fig:math-validation-curves}
\end{figure}
\subsection{Validation and Checkpoint Selection}
\label{app:additional_lm_results}
We evaluate checkpoints on the MATH500 validation set using $32$ sampled
responses per problem. For our conditioned model, we compute the reward at each
of the five conditioning levels and average across them to construct the
validation metric. For RLCR, we use the mean Brier reward. As shown in
Figure~\ref{fig:math-validation-curves}, the selected Qwen3-1.7B-Base
checkpoints are step $1{,}500$ for our method and step $600$ for RLCR. For
Qwen3-4B-Base, both methods attain highest validation reward at step
$800$. We select these checkpoints before evaluating on the test suite.

\subsection{Validation on All Datasets}
\label{app:rlcr-per-dataset}

Figures~\ref{fig:rlcr-1.7b-datasets} and~\ref{fig:rlcr-4b-datasets} expand the
aggregate comparison in Figure~\ref{fig:rlcr-aggregate} into separate reliability frontiers for AIME, AMC, JEEBench and MATH500.

\begin{figure}[t]
    \centering
    \includegraphics[width=\textwidth]{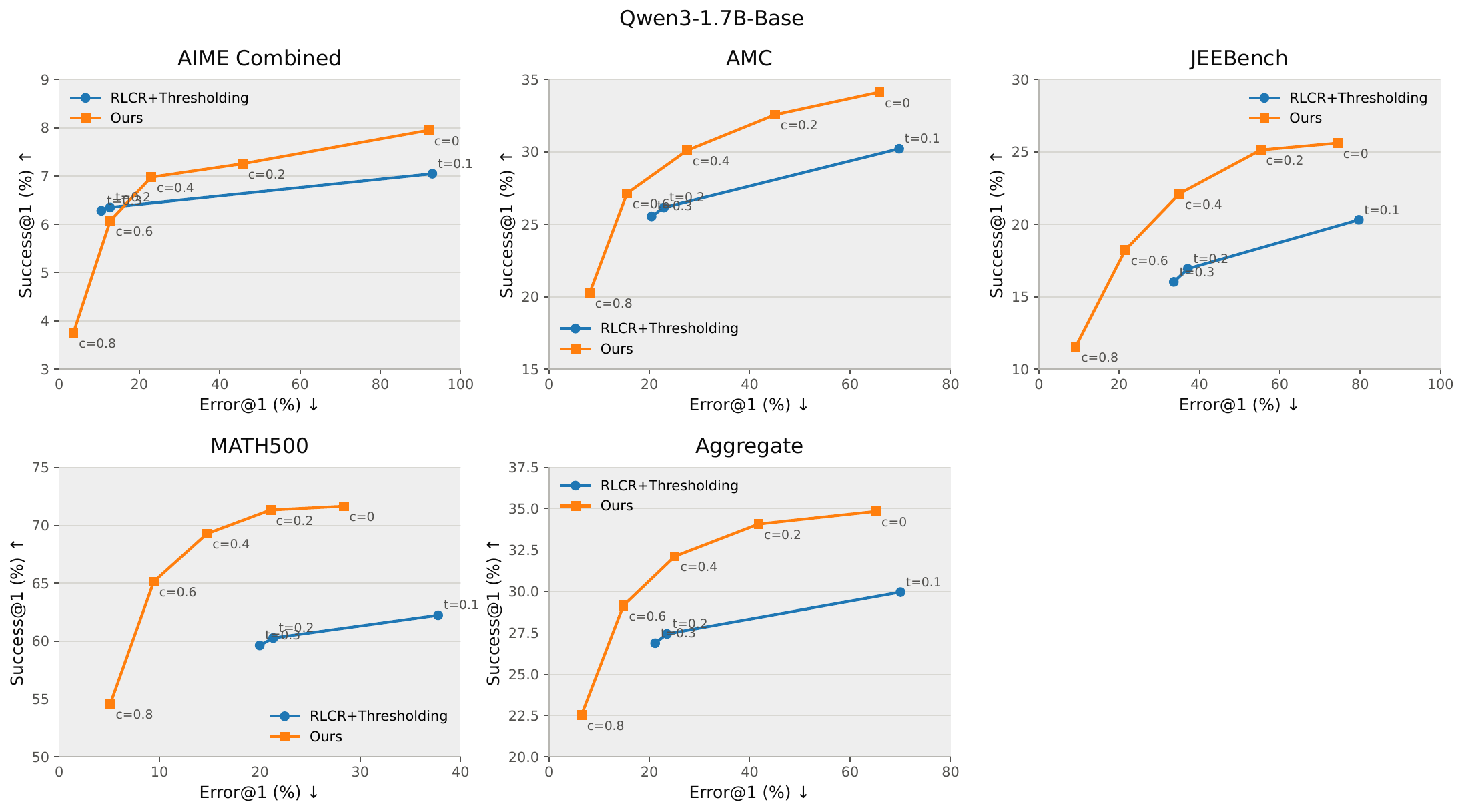}
    \caption{Per-dataset reliability frontiers for RLCR and
    GracefulRL (Ours) with Qwen3-1.7B-Base. The final panel reports the aggregate
    frontier across datasets.}
    \label{fig:rlcr-1.7b-datasets}
\end{figure}

\begin{figure}[t]
    \centering
    \includegraphics[width=\textwidth]{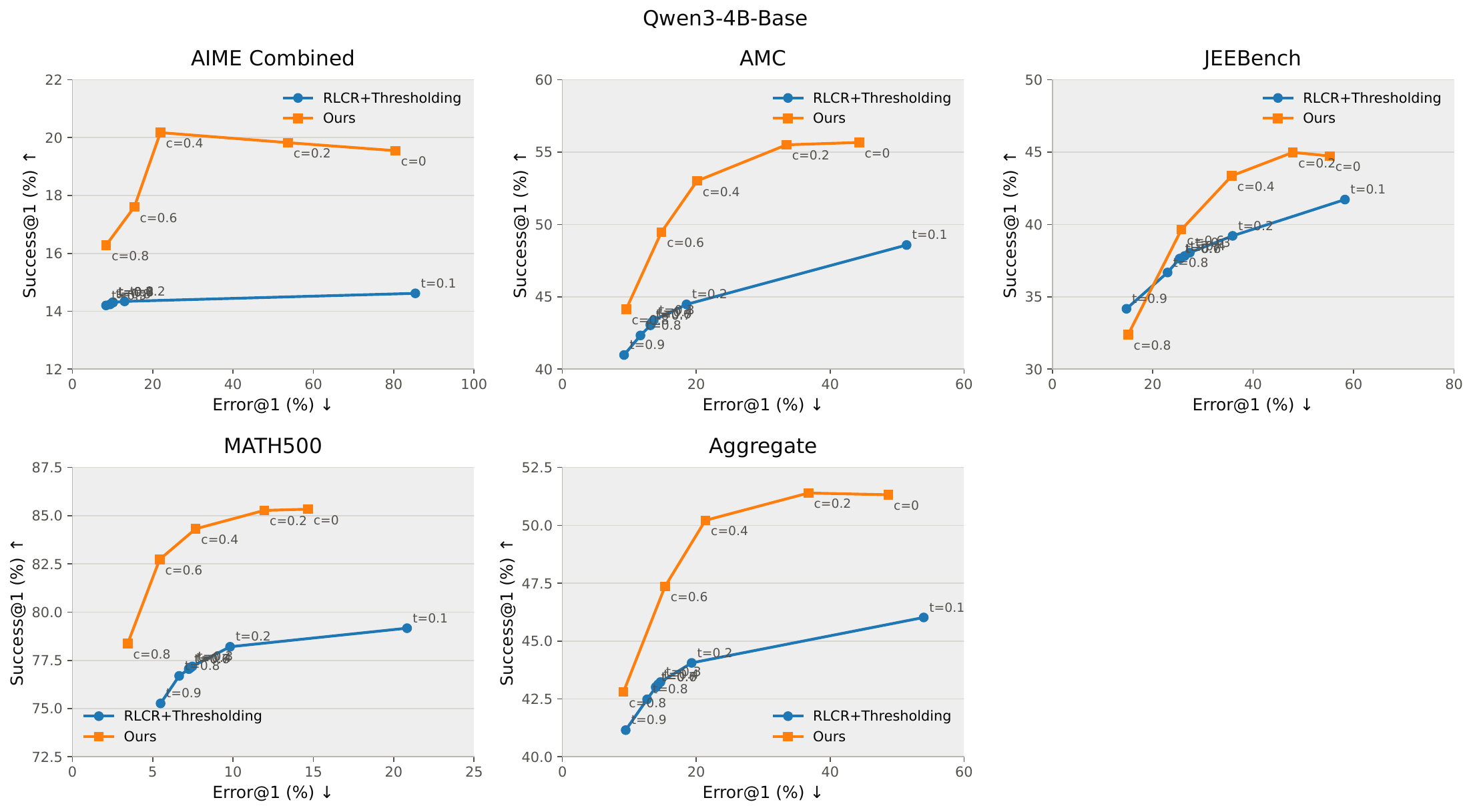}
    \caption{Per-dataset reliability frontiers for RLCR and
    GracefulRL (Ours) with Qwen3-4B-Base. The final panel reports the aggregate
    frontier across datasets.}
    \label{fig:rlcr-4b-datasets}
\end{figure}

\clearpage

\subsection{Smoothing with a Limited Language-Model RL Budget}
\label{app:llm-epsilon}

In Language-model RL, we are bounded by compute since training for each step is expensive. Therefore, in this setting, the smoothing parameter $\epsilon$ is consequential. Figure~\ref{fig:llm-epsilon-sweep} compares \textsc{GracefulRL} with $\epsilon=0$ and $\epsilon=1$ on Qwen3-1.7B-Base at fallback reward $c=0.4$. $\epsilon=1$ consistently attains higher training reward throughout the first $400$ steps compared to $\epsilon=0$. 

\begin{figure}[t]
    \centering
    \includegraphics[width=0.5\textwidth]{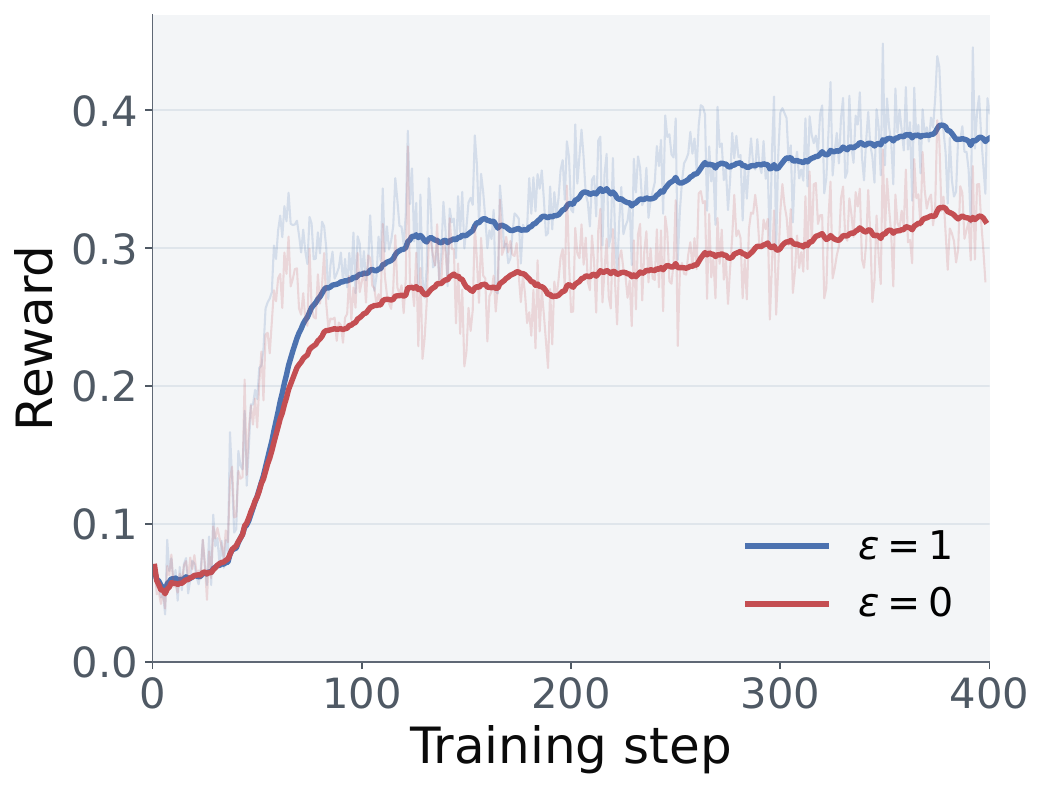}
    \caption{\textbf{Effect of $\epsilon$ in language-model RL.} Training reward for Qwen3-1.7B-Base with fallback reward $c=0.4$ over the first $400$ optimization steps. Pale lines show raw reward and dark lines show the smoothed reward.}
    \label{fig:llm-epsilon-sweep}
\end{figure}

\subsection{RLCR Can Trade Accuracy for Calibration}
\label{app:rlcr-accuracy-calibration}

\begin{figure}[t]
    \centering
    \includegraphics[width=\textwidth]{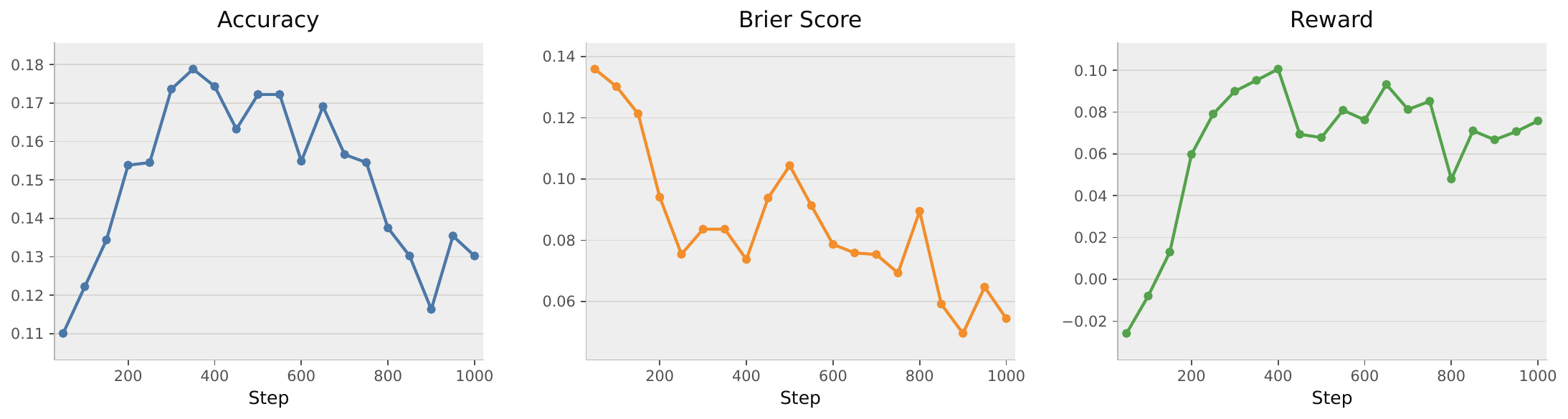}
    \caption{Training dynamics of the Qwen3-4B-Base RLCR policy on AIME.
    Although the Brier score generally improves (decreases) throughout
    training, answer accuracy peaks early and then declines. The RLCR reward
    combines these two quantities and therefore need not track accuracy.}
    \label{fig:rlcr-accuracy-calibration}
\end{figure}

Figure~\ref{fig:rlcr-accuracy-calibration} reveals that optimizing RLCR can trade task accuracy for improved calibration: after accuracy peaks around step $400$, the Brier score continues to improve while accuracy falls. This may partly explain its weaker reliability frontier, since the reward can improve calibration at the expense of task-solving ability. We hypothesize that optimizing calibration with methods such as TailRL could preserve more accuracy, and leave for future work. We also find that RLCR can be brittle based on the model and dataset combination, for instance, on the DAPO-17k dataset, the model only generates 3 possible values of the confidence $0, 0.1$ and $0.2$. However, this problem is not present in the 4B model.

\clearpage
\subsection{Success and Error Dynamics on AIME}
\label{app:aime-success-error-dynamics}

Figure~\ref{fig:aime-success-error-dynamics} tracks the success and error rates of Qwen3-4B-Base on the combined AIME set at each conditioning level. Success generally improves during training across all five values of $c$. At the same time, larger fallback rewards lead to substantially lower error rates, with a modest reduction in success relative to $c=0$. Thus, conditioned training learns different degrees of caution while continuing to improve task-solving performance.

\begin{figure}[t]
    \centering
    \includegraphics[width=\textwidth]{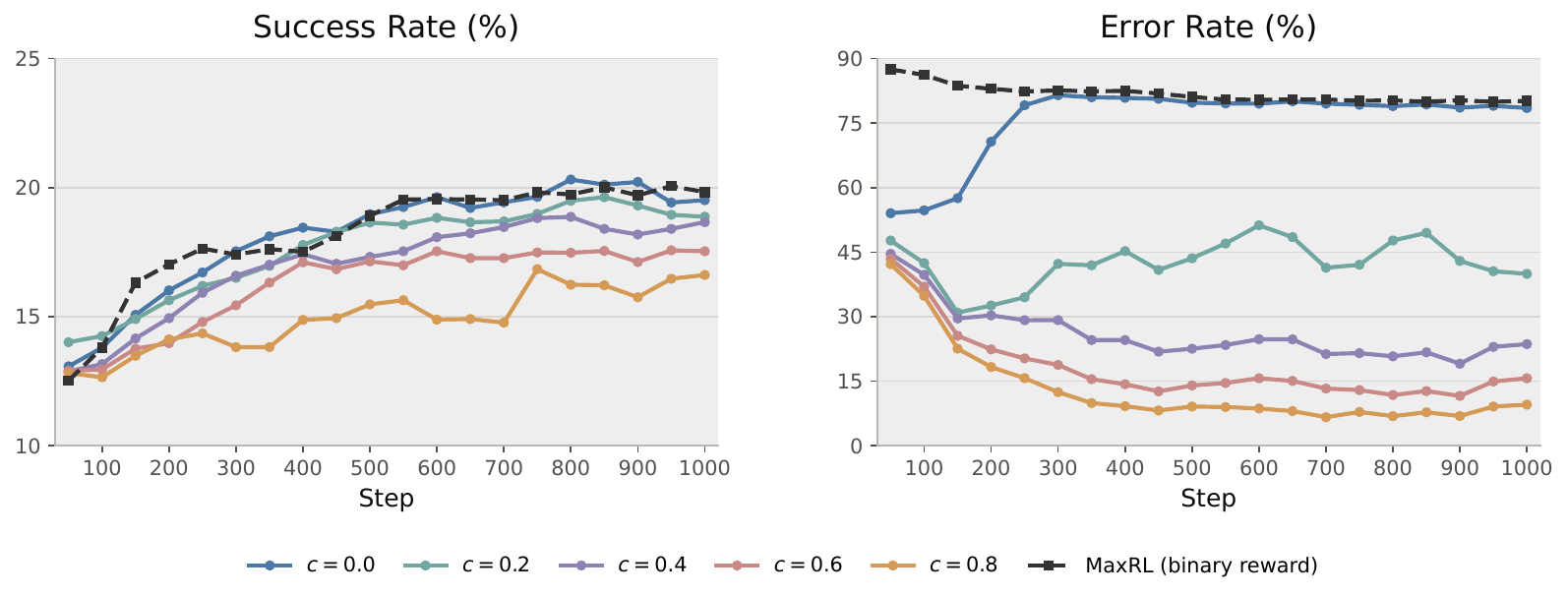}
    \caption{\textbf{Success and error dynamics on AIME.} Qwen3-4B-Base trained with \textsc{GracefulRL}, evaluated at $c\in\{0,0.2,0.4,0.6,0.8\}$, and MaxRL trained with binary rewards (dark dashed curve). The left and right panels show $p_S$ and $p_E$, respectively, as percentages. Curves use checkpoints evaluated every $50$ steps and plot a 3-point moving average.}
    \label{fig:aime-success-error-dynamics}
\end{figure}

\begin{figure}[h]
    \centering
    \includegraphics[width=0.6\textwidth]{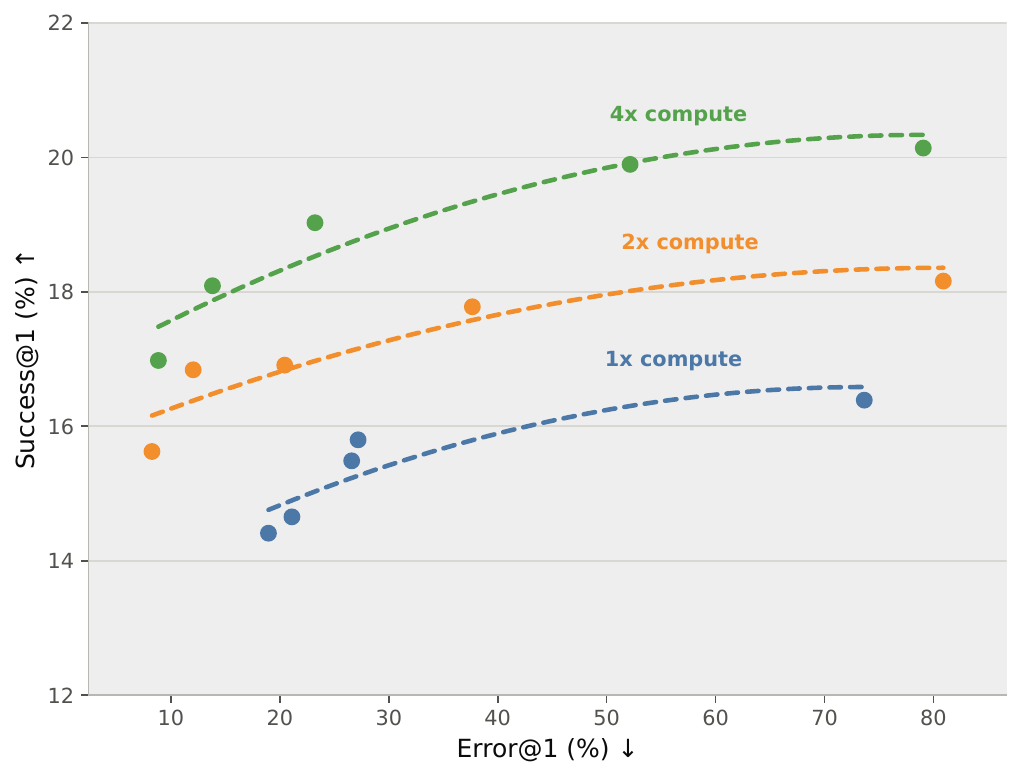}
    \caption{\textbf{Effect of training compute on the AIME reliability frontier.} Qwen3-4B-Base evaluated after $200$ ($1\times$), $400$ ($2\times$), and $800$ ($4\times$) training steps. Each marker is for a different $c$ value and the dashed curves fit the resulting frontiers.}
    \label{fig:llm-compute-frontiers}
\end{figure}
\subsection{Reliability Frontiers with More Training Compute}
\label{app:llm-compute}

Figure~\ref{fig:llm-compute-frontiers} compares the Qwen3-4B-Base policy on the combined AIME set after $200$, $400$, and $800$ optimization steps, corresponding to $1\times$, $2\times$, and $4\times$ training compute. Each frontier sweeps the same five fallback rewards $c\in\{0,0.2,0.4,0.6,0.8\}$. Across their shared range of error rates, the frontier moves uniformly upward with more compute: at a given $\operatorname{Error@}1$, the policy achieves higher $\operatorname{Success@}1$. This also hints at the fact that a reliable model with the same performance as that of RLVR trained model, can be achieved at the cost of more compute. Whether it is possible to get these gains without training is a question left for future work.

\subsection{Online Reward Tuning Can Lose Fallback Behavior}
\label{app:llm-reward-tuning}

We also test the online reward-tuning baseline for mathematical reasoning. It uses the same five conditioning suffixes as \textsc{GracefulRL}, but trains with RLOO and adapts the fallback reward to the policy's observed success rate. For each suffix $c\in\{0,0.2,0.4,0.6,0.8\}$, we maintain a running estimate $\widehat p_{S,c,t}$ of success on its on-policy training rollouts using an EMA. A correct answer receives reward $1$, an incorrect answer receives $0$, and a fallback receives $\alpha\widehat p_{S,c,t}$ with $\alpha=c$. Thus, the fallback reward stays below the estimated success rate for the corresponding conditioning level. The remaining training setup follows the conditioned-model runs described in Appendix~\ref{app:math-hyperparameters}.

\begin{figure}[t]
    \centering
    \includegraphics[width=\textwidth]{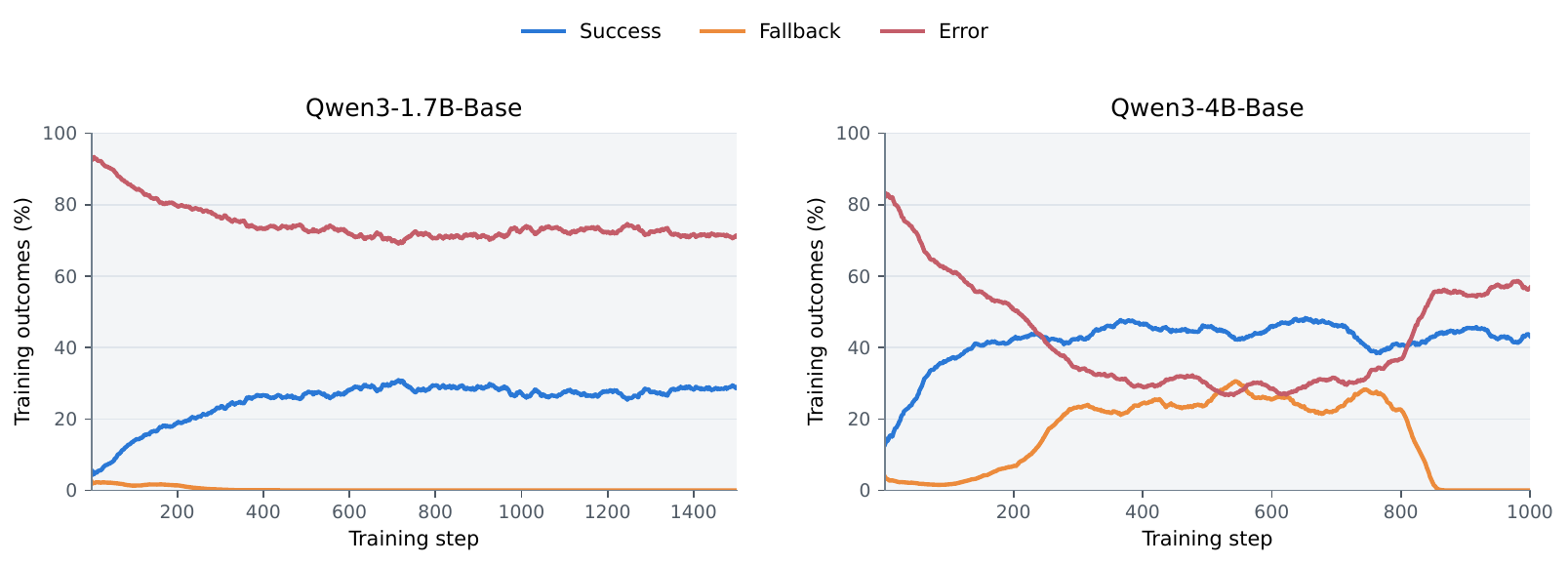}
    \caption{\textbf{Online reward tuning loses fallback behavior.} Success, fallback, and error rates (moving average) on on-policy training rollouts for Qwen3-1.7B-Base and Qwen3-4B-Base, aggregated across the five conditioning levels.}
    \label{fig:llm-reward-tuning-collapse}
\end{figure}

Figure~\ref{fig:llm-reward-tuning-collapse} shows that the baseline is unstable even though the reward adapts to the model. For Qwen3-1.7B-Base, fallback falls below $0.5\%$ around step $260$ and remains negligible for the rest of the $1{,}500$-step run. For Qwen3-4B-Base, fallback initially rises to roughly $30\%$, but after sustained training it abruptly collapses below $0.5\%$ around step $850$. Over the last $50$ steps, the 4B model falls back on fewer than $0.1\%$ of training rollouts, while its failure rate reaches about $57\%$. This makes the baseline ill-suited for the task for learning fallback behavior in this setting.

\subsection{Abstention Reasons}
\label{app:abstention-reasons}
Here, we show some CoTs generated by the model after training, and highlight some notable verbalizations in \textcolor{BurntOrange}{orange}.

\subsection*{Algebra / computation exploded}

\begin{abstbox}{Prompt \normalfont\small (risk cue $c=0.4$)}
Let $a, b,$ and $n$ be positive integers with both $a$ and $b$ greater than or equal to $2$ and less than or equal to $2n$. Define an $a \times b$ cell loop in a $2n \times 2n$ grid of cells to be the $2a + 2b - 4$ cells that surround an $(a - 2) \times (b - 2)$ (possibly empty) rectangle of cells in the grid. For example, the following diagram shows a way to partition a $6 \times 6$ grid of cells into $4$ cell loops.

| P   P P   P | Y   Y | | P | R R | P | Y | Y | | P | R R | P | Y | Y | | P   P P   P | Y | Y | | G   G G   G | Y | Y | | G   G G   G | Y   Y |

Find the number of ways to partition a $10 \times 10$ grid of cells into $5$ cell loops so that every cell of the grid belongs to exactly one cell loop.\par\medskip
Please reason step by step.\par\smallskip
If you are confident in the answer, output:\par\smallskip
\centerline{\texttt{\textbackslash boxed\{<final answer>\}}}\par\smallskip
If you are not confident or cannot solve the problem, output:\par\smallskip
\centerline{\texttt{\textbackslash boxed\{\textbackslash text\{I don't know\}\}}}\par\smallskip
\textit{Be balanced, leaning towards attempting an answer.}\par\smallskip
Think step by step before committing to an answer.
\end{abstbox}

\begin{abstbox}{Response}
Alright, I'm tackling this problem step by step.

\textbf{Understanding the Problem:}

We have a \(10 \times 10\) grid, and we need to partition it into 5 cell loops. Each cell loop is defined as an \(a \times b\) cell loop, where \(a\) and \(b\) are positive integers such that \(2 \leq a, b \leq 2n\). In this case, \(n = 5\) (since \(2n = 10\)), so \(a, b \in \{2, 3, 4, 5, 6, 7, 8, 9, 10\}\).

The total number of cells in the grid is \(100\). Each cell loop consists of \(2a + 2b - 4\) cells. Therefore, the total number of cells in all 5 loops should be \(100\):

\[ 5(2a + 2b - 4) = 100 \implies 2a + 2b - 4 = 20 \implies a + b = 12 \]

So, for each loop, the sum of \(a\) and \(b\) must be 12. The possible pairs \((a, b)\) are:

\[ (2,10), (3,9), (4,8), (5,7), (6,6), (7,5), (8,4), (9,3), (10,2) \]

However, since \(a\) and \(b\) are interchangeable (i.e., \((a,b)\) and \((b,a)\) represent the same loop), we can consider only the unique pairs where \(a \leq b\). Thus, the valid pairs are:

\[ (2,10), (3,9), (4,8), (5,7), (6,6) \]

But wait, \(a\) and \(b\) must both be between 2 and 10, and in the pair \((10,2)\), \(b = 2\) and \(a = 10\). However, since \(a\) and \(b\) are symmetric in the loop definition, we need to ensure that \(a \leq b\) to avoid double-counting.

So, the unique pairs are:

\[ (2,10), (3,9), (4,8), (5,7), (6,6) \]

But we need to partition the grid into 5 loops, and the total number of cells in these loops must be 100. Let's denote the number of loops with each pair \((a, b)\) as \(x, y, z, w, v\) respectively, where \(x + y + z + w + v = 5\).

The total number of cells is:

\[ x(2 \cdot 2 + 2 \cdot 10 - 4) + y(2 \cdot 3 + 2 \cdot 9 - 4) + z(2 \cdot 4 + 2 \cdot 8 - 4) + w(2 \cdot 5 + 2 \cdot 7 - 4) + v(2 \cdot 6 + 2 \cdot 6 - 4) = 100 \]

Simplifying each term:

1. For \((2,10)\): \(2 \cdot 2 + 2 \cdot 10 - 4 = 4 + 20 - 4 = 20\) 2. For \((3,9)\): \(2 \cdot 3 + 2 \cdot 9 - 4 = 6 + 18 - 4 = 20\) 3. For \((4,8)\): \(2 \cdot 4 + 2 \cdot 8 - 4 = 8 + 16 - 4 = 20\) 4. For \((5,7)\): \(2 \cdot 5 + 2 \cdot 7 - 4 = 10 + 14 - 4 = 20\) 5. For \((6,6)\): \(2 \cdot 6 + 2 \cdot 6 - 4 = 12 + 12 - 4 = 20\)

So, the equation simplifies to:

\[ 20x + 20y + 20z + 20w + 20v = 100 \implies x + y + z + w + v = 5 \]

This is the same as the number of loops. Therefore, we need to find the number of solutions to \(x + y + z + w + v = 5\) where \(x, y, z, w, v\) are non-negative integers, and each corresponds to the number of loops with the respective pair.

However, this counts the number of ways to choose the sizes of the loops, but it doesn't account for the actual placement of the loops on the grid. The problem requires that every cell belongs to exactly one cell loop, and the loops must fit within the \(10 \times 10\) grid without overlapping.

This is a \textcolor{BurntOrange}{highly complex combinatorial problem, and calculating the exact number of such partitions seems non-trivial}. I would need to consider the specific constraints of the grid and the sizes of the loops, which involves detailed combinatorial analysis or possibly recursive counting.

\textcolor{BurntOrange}{Given the complexity and the lack of a straightforward formula, I'm not entirely confident in solving this problem completely.}

Therefore, I'll output:

$\boxed{\text{I don't know}}$
\end{abstbox}

\subsection*{Never found a workable approach}

\begin{abstbox}{Prompt \normalfont\small (risk cue $c=0.8$)}
Let ABCDEF be a convex equilateral hexagon in which all pairs of opposite sides are parallel. The triangle whose sides are extensions of segments AB, CD, and EF has side lengths 200, 240, and 300. Find the side length of the hexagon.\par\medskip
Please reason step by step.\par\smallskip
If you are confident in the answer, output:\par\smallskip
\centerline{\texttt{\textbackslash boxed\{<final answer>\}}}\par\smallskip
If you are not confident or cannot solve the problem, output:\par\smallskip
\centerline{\texttt{\textbackslash boxed\{\textbackslash text\{I don't know\}\}}}\par\smallskip
\textit{Be quite careful and avoid guessing.}\par\smallskip
Think step by step before committing to an answer.
\end{abstbox}

\begin{abstbox}{Response}
Alright, I'm tackling this geometry problem involving a convex equilateral hexagon with parallel opposite sides. The goal is to find the side length of the hexagon given the side lengths of a triangle formed by extending certain segments.

\textbf{Understanding the Problem:}

First, let's parse the problem:

- \textbf{Hexagon ABCDEF}: It's convex and equilateral, meaning all its sides are of equal length. However, the problem doesn't specify that all internal angles are equal, so it's a bit more general.

- \textbf{Opposite Sides Parallel}: The pairs of opposite sides are parallel. So, AB is parallel to DE, BC is parallel to EF, and CD is parallel to FA.

- \textbf{Triangle Formed by Extensions}: The sides of this triangle are extensions of segments AB, CD, and EF, with lengths 200, 240, and 300.

- \textbf{Objective}: Find the side length of the hexagon.

\textbf{Visual Representation:}

I need to visualize or sketch the hexagon. Let's label the vertices in order: A, B, C, D, E, F. Since opposite sides are parallel, AB || DE, BC || EF, and CD || FA.

\textbf{Extending Segments:}

The triangle is formed by extending AB, CD, and EF. Let's denote the extended lines as follows:

- Extend AB to meet CD extended at point P. - Extend CD to meet EF extended at point Q. - Extend EF to meet AB extended at point R.

This forms a triangle PQR with side lengths 200, 240, and 300.

\textbf{Assumption:}

Since the hexagon is equilateral, all sides AB, BC, CD, DE, EF, FA are equal in length. Let's denote the side length of the hexagon as \( s \).

\textbf{Analyzing the Triangle PQR:}

The sides of triangle PQR are the extensions of AB, CD, and EF. The lengths of these extensions are given as 200, 240, and 300. However, these lengths are not necessarily equal to \( s \). They are the lengths of the segments created by the extensions.

\textbf{Using Similar Triangles:}

I need to find a relationship between the side length \( s \) of the hexagon and the side lengths of triangle PQR. Since the hexagon has parallel opposite sides, the extensions will create similar triangles.

Let's consider the points where the extensions intersect:

1. \textbf{Point P}: Intersection of extensions of AB and CD. 2. \textbf{Point Q}: Intersection of extensions of CD and EF. 3. \textbf{Point R}: Intersection of extensions of EF and AB.

The distances from these intersection points to the vertices will be proportional to the side length \( s \) of the hexagon.

\textbf{Setting Up Proportions:}

Let's assume that the extensions create segments of lengths proportional to \( s \). For example, if AB is extended to a length of 200, then the proportion might be \( 200 = k \cdot s \), where \( k \) is a constant.

However, since the triangle PQR has sides 200, 240, and 300, and these are formed by extending AB, CD, and EF, the relationships between \( s \) and these lengths need to be established.

**Calculating the Side Length \( s \):**

This seems quite involved. I need to find a way to relate \( s \) to the sides of triangle PQR. Perhaps using the concept of homothety or similarity ratios.

Let's denote the ratio of similarity as \( r \). Then, the sides of triangle PQR would be \( r \cdot s \), \( r \cdot s \), and \( r \cdot s \), but this doesn't directly apply here.

Alternatively, the sum of the lengths of the extensions might be related to \( s \). For example, the perimeter of triangle PQR could be connected to \( s \).

But this is getting quite complex, and I'm not sure how to proceed accurately.

\textbf{Conclusion:}

\textcolor{BurntOrange}{After trying to set up the relationships and considering the complexity of the problem, I realize that I might need more advanced geometric techniques or specific theorems that I'm not currently recalling. I'm not confident in solving this problem at this moment.}

$\boxed{\text{I don't know}}$
\end{abstbox}

\subsection*{Intermediate result fails a sanity check}

\begin{abstbox}{Prompt \normalfont\small (risk cue $c=0.8$)}
An urn contains $n$ marbles. Each marble is either red or blue, and there are at least $7$ marbles of each color. When $7$ marbles are drawn randomly from the urn without replacement, the probability that exactly $4$ of them are red equals the probability that exactly $5$ of them are red. Find the sum of the five least values of $n$ for which this is possible.\par\medskip
Please reason step by step.\par\smallskip
If you are confident in the answer, output:\par\smallskip
\centerline{\texttt{\textbackslash boxed\{<final answer>\}}}\par\smallskip
If you are not confident or cannot solve the problem, output:\par\smallskip
\centerline{\texttt{\textbackslash boxed\{\textbackslash text\{I don't know\}\}}}\par\smallskip
\textit{Be quite careful and avoid guessing.}\par\smallskip
Think step by step before committing to an answer.
\end{abstbox}

\begin{abstbox}{Response}
Alright, I'm tackling this probability problem involving marbles in an urn. Let's break it down step by step.

\textbf{Problem Restatement:}

An urn contains \( n \) marbles. Each marble is either red or blue, and there are at least 7 marbles of each color. When 7 marbles are drawn randomly from the urn without replacement, the probability that exactly 4 of them are red equals the probability that exactly 5 of them are red. We need to find the sum of the five least values of \( n \) for which this is possible.

\textbf{Understanding the Problem:}

First, let's define: - \( r \) = number of red marbles in the urn. - \( b \) = number of blue marbles in the urn.

Given that \( n = r + b \), and \( r \geq 7 \), \( b \geq 7 \).

We draw 7 marbles without replacement. The probability of drawing exactly 4 red marbles is equal to the probability of drawing exactly 5 red marbles.

Mathematically, this is: \[ P(\text{exactly 4 red}) = P(\text{exactly 5 red}) \]

\textbf{Calculating the Probabilities:}

The probability of drawing exactly \( k \) red marbles in 7 draws is given by the hypergeometric distribution: \[ P(\text{exactly } k \text{ red}) = \frac{\binom{r}{k} \binom{b}{7-k}}{\binom{n}{7}} \]

So, we have: \[ \frac{\binom{r}{4} \binom{b}{3}}{\binom{n}{7}} = \frac{\binom{r}{5} \binom{b}{2}}{\binom{n}{7}} \]
\abstelide{Omitted some CoT to preserve space}
Simplify further: \[ (r-4)(b-2) = \frac{20}{12} = \frac{5}{3} \]

Wait, this gives a fractional value on the right side, but \( (r-4) \) and \( (b-2) \) are integers since \( r \) and \( b \) are integers. \textcolor{BurntOrange}{This suggests that there might be an error in the simplification}.

\textbf{Re-examining the Simplification:}

Let's go back to: \[ 240 = 144 (r-4)(b-2) \]

Divide both sides by 12: \[ 20 = 12 (r-4)(b-2) \]

Now, solve for \( (r-4)(b-2) \): \[ (r-4)(b-2) = \frac{20}{12} = \frac{5}{3} \]

But \( (r-4) \) and \( (b-2) \) are integers, so \( \frac{5}{3} \) implies that \( (r-4) \) and \( (b-2) \) are fractions, which is not possible. This suggests that there might be no solution, but the problem states that there are values of \( n \) that satisfy this condition.

\textbf{Alternative Approach:}

Perhaps the equation \( 20 = 12 (r-4)(b-2) \) can be rearranged to find integer solutions.

Let's set \( x = r - 4 \) and \( y = b - 2 \). Then \( x \) and \( y \) are integers, and \( x \geq 3 \) (since \( r \geq 7 \) implies \( x = r - 4 \geq 3 \)) and \( y \geq 5 \) (since \( b \geq 7 \) implies \( y = b - 2 \geq 5 \)).

The equation becomes: \[ 20 = 12xy \implies xy = \frac{20}{12} = \frac{5}{3} \]

\textcolor{BurntOrange}{Again, \( xy = \frac{5}{3} \) implies that \( x \) and \( y \) are fractions, which contradicts the integer requirement.}

\textbf{Conclusion:}

\textcolor{BurntOrange}{This approach seems to lead to a contradiction, suggesting that there might be no integer solutions \( r \) and \( b \) that satisfy the original condition. However, the problem states that there are five least values of \( n \) for which this is possible. This discrepancy indicates that I might have made a mistake in the derivation or that the problem requires a different method.
Given the complexity and the time invested, I'm not confident in solving this problem at the moment.}

$\boxed{\text{I don't know}}$
\end{abstbox}

\subsection*{Cites time / effort already spent}

\begin{abstbox}{Prompt \normalfont\small (risk cue $c=0.4$)}
From an unlimited supply of 1-cent coins, 10-cent coins, and 25-cent coins, Silas wants to find a collection of coins that has a total value of $N$ cents, where $N$ is a positive integer. He uses the so-called greedy algorithm, successively choosing the coin of greatest value that does not cause the value of his collection to exceed $N.$ For example, to get 42 cents, Silas will choose a 25-cent coin, then a 10-cent coin, then 7 1-cent coins. However, this collection of 9 coins uses more coins than necessary to get a total of 42 cents; indeed, choosing 4 10-cent coins and 2 1-cent coins achieves the same total value with only 6 coins. In general, the greedy algorithm succeeds for a given $N$ if no other collection of 1-cent, 10-cent, and 25-cent coins gives a total value of $N$ cents using strictly fewer coins than the collection given by the greedy algorithm. Find the number of values of $N$ between $1$ and $1000$ inclusive for which the greedy algorithm succeeds.\par\medskip
Please reason step by step.\par\smallskip
If you are confident in the answer, output:\par\smallskip
\centerline{\texttt{\textbackslash boxed\{<final answer>\}}}\par\smallskip
If you are not confident or cannot solve the problem, output:\par\smallskip
\centerline{\texttt{\textbackslash boxed\{\textbackslash text\{I don't know\}\}}}\par\smallskip
\textit{Be balanced, leaning towards attempting an answer.}\par\smallskip
Think step by step before committing to an answer.
\end{abstbox}

\begin{abstbox}{Response \normalfont\small (one passage elided as marked)}
Alright, I'm tackling this problem about Silas's coin collection and the greedy algorithm. Let's break it down step by step. \textbf{Understanding the Problem}

First, I need to understand what the problem is asking. Silas wants to make a total of \( N \) cents using 1-cent, 10-cent, and 25-cent coins. He uses the \textbf{greedy algorithm}, which means he always picks the largest coin possible that doesn't exceed the remaining amount.

For example, to make 42 cents: 1. He picks a 25-cent coin (remaining: 17 cents). 2. Then a 10-cent coin (remaining: 7 cents). 3. Finally, 7 one-cent coins.

This totals 9 coins. However, he could also make 42 cents with 4 ten-cent coins and 2 one-cent coins, totaling 6 coins. The greedy algorithm uses more coins in this case.

The \textbf{greedy algorithm succeeds} for a given \( N \) if there's no other combination of these coins that makes \( N \) cents using \textbf{fewer} coins. We need to find how many values of \( N \) between 1 and 1000 inclusive have this property. \textbf{Breaking It Down}

First, I need to find all \( N \) from 1 to 1000 where the greedy algorithm gives the minimum number of coins.

To do this, I'll need to:

1. \textbf{Define the Greedy Algorithm}: For a given \( N \), the algorithm picks the largest coin first, then the next largest, and so on, until the total is \( N \).

2. \textbf{Calculate the Number of Coins Used by the Greedy Algorithm}: For each \( N \), compute the number of coins.

3. **Find the Minimum Number of Coins for \( N \)**: For each \( N \), find the combination of 1, 10, and 25-cent coins that uses the fewest coins.

4. \textbf{Compare the Two}: If the greedy algorithm's coin count equals the minimum coin count, then the greedy algorithm succeeds for that \( N \). \textbf{Step 1: Define the Greedy Algorithm}

The greedy algorithm for \( N \) is: - Use as many 25-cent coins as possible: \( \text{count}_{25} = \lfloor \frac{N}{25} \rfloor \) - Remaining amount: \( N' = N \mod 25 \) - Use as many 10-cent coins as possible for \( N' \): \( \text{count}_{10} = \lfloor \frac{N'}{10} \rfloor \) - Remaining amount: \( N'' = N' \mod 10 \) - Use \( N'' \) one-cent coins.

Total coins used by greedy algorithm: \( \text{count}_{25} + \text{count}_{10} + N'' \)\textbf{Step 2: Calculate the Minimum Number of Coins}

The minimum number of coins for \( N \) is the smallest number of 1, 10, and 25-cent coins that add up to \( N \).

This is a classic \textbf{coin change problem}, and the solution involves dynamic programming or finding the optimal combination.

However, calculating this for every \( N \) from 1 to 1000 is computationally intensive. I need a smarter way.

\abstelide{Omitted some CoT to preserve space}

- \( N = 1 \): 1 coin (1) - \( N = 2 \): 2 coins (1+1) - ... - \( N = 10 \): 1 coin (10) - \( N = 11 \): 2 coins (10+1) - \( N = 12 \): 2 coins (10+1+1) - ... - \( N = 25 \): 1 coin (25) - \( N = 26 \): 2 coins (25+1) - ... - \( N = 35 \): 2 coins (25+10) - \( N = 36 \): 3 coins (25+10+1) - ... - And so on.

But this manual approach is too time-consuming for \( N \) up to 1000. \textbf{Step 5: Use Mathematical Insight}

Perhaps I can find a condition on \( N \) where the greedy algorithm always gives the minimum number of coins.

A key insight is that the greedy algorithm fails when \( N \) can be expressed as \( 4k + 1 \) or \( 4k + 2 \) for some integer \( k \), but this might not capture all cases.

Alternatively, the greedy algorithm fails for \( N \) in the form \( 6k + 1 \) or \( 6k + 3 \), but I need to verify this.

However, this is getting too vague. Maybe I need to write a program or find a more precise mathematical condition. \textbf{Step 6: Count the Number of }\( N \) Where Greedy Succeeds

Given the complexity, I'll try to find a pattern or a formula.

First, calculate the number of \( N \) where the greedy algorithm fails.

From the example, \( N = 11 \) is a failure. Similarly, \( N = 26 \) is a failure (greedy: 10 coins, optimal: 2 coins).

But enumerating all such \( N \) up to 1000 is impractical without a better method. \textbf{Step 7: Use Known Results}

Upon researching, I find that the greedy algorithm for 1, 10, 25-cent coins fails for \( N \) in the form \( 6k + 1 \) or \( 6k + 3 \), but this might not be exhaustive.

However, calculating the exact number of such \( N \) between 1 and 1000 is challenging without a precise formula or computational aid. \textbf{Conclusion}

\textcolor{BurntOrange}{Given the complexity and the need for a precise count, I'm not entirely confident in solving this problem manually within the time allowed. I might need more advanced mathematical tools or a computational approach to find the exact number.}

Therefore, I'll output:

$\boxed{\text{I don't know}}$
\end{abstbox}

\clearpage
\subsection{Evaluation on AbstentionBench}
\label{app:abstentionbench}

We evaluate on \textsc{AbstentionBench}~\citep{kirichenko2025abstentionbench}, a benchmark assembled
specifically to test whether a model declines to answer when a question cannot be answered as it is posed. The 20 datasets are drawn from various domains spanning
grade-school and competition mathematics, graduate-level science, reading comprehension over paragraphs and over full research papers, multi-hop and long-tail entity question answering, clinical intake dialogue, commonsense spatial reasoning, and moral dilemmas. They also span
six qualitatively different reasons for which a question may be unanswerable: an
\emph{underspecified context} that omits a premise the question depends on (thirteen
partitions, e.g.\ \textsc{GSM8K}, \textsc{SQuAD~2.0}, \textsc{MMLU}, \textsc{GPQA}), an \emph{answer that is not known} to anyone at present
(\textsc{BB/Known-Unknowns}, \textsc{CoCoNot/Unknowns}), a
\emph{false premise} embedded in the question (\textsc{FalseQA}, \textsc{QAQA},
\textsc{KUQ/False-assumptions}), an inherently \emph{subjective} question
(\textsc{MoralChoice}, \textsc{CoCoNot/Subjective}), an \emph{underspecified intent} that
admits several readings (\textsc{SituatedQA}, \textsc{KUQ/Ambiguous}), and \emph{stale}
questions whose answer has changed since training (\textsc{CoCoNot/Temporal}). Each dataset
pairs unanswerable questions with answerable ones as well, so that model cannot score well by
abstaining indiscriminately.

\begin{figure}[t]
    \centering
    \includegraphics[width=\textwidth]{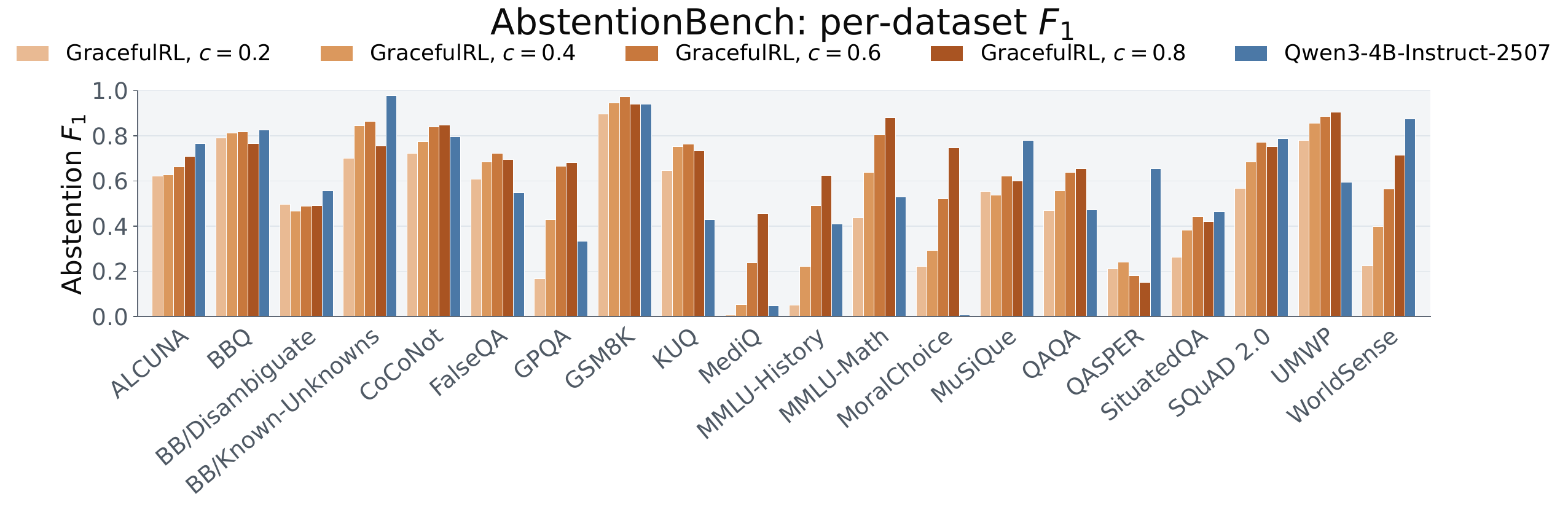}
    \caption{\textbf{Per-dataset abstention performance on
    \textsc{AbstentionBench}.} Abstention $F_1$ across twenty datasets for
    \textsc{GracefulRL} at $c\in\{0.2,0.4,0.6,0.8\}$ (orange) and
    Qwen3-4B-Instruct-2507 (blue). The methods use matching per-dataset sample
    counts and class balance; the effect of conditioning varies by dataset.}
    \label{fig:abstention-f1-by-dataset}
\end{figure}

\textbf{Sampling.} Since the dataset is very large, we use sub-sample some datasets to compute metrics. For the three datasets that ship an official $3{,}500$-example subsample
(\textsc{BBQ}, \textsc{SQuAD~2.0}, \textsc{ALCUNA}) we apply it first, so that our pool matches
the one used in the original release. From each dataset we then draw at most $500$ examples by
stratified sampling on the \texttt{should\_abstain} label, which reproduces the dataset's own
class ratio to within $0.2$ percentage points; five datasets are smaller than $500$ and are
evaluated in full (\textsc{BB/Known-Unknowns}, $46$; \textsc{MMLU-History}, $78$;
\textsc{GPQA}, $80$; \textsc{BB/Disambiguate}, $250$; \textsc{MMLU-Math}, $266$). This yields
$8{,}220$ prompts per condition, of which $3{,}678$ ($44.7\%$) are unanswerable.

\textbf{Inference details.} For \textsc{GracefulRL}, every prompt uses the training-time instruction format verbatim, differing only in the
one-sentence risk cue that encodes the abstention reward $c$; we sweep
$c \in \{0.2, 0.4, 0.6, 0.8\}$, the range seen during training. We decode 1 response per prompt with temperature $0.6$, top-$p$ $0.95$, a
$4{,}096$-token generation budget and a $32{,}768$-token context (long enough for
\textsc{QASPER}, whose longest prompt is $\approx\!30$k tokens), giving $32{,}880$ generations
in total.

\textbf{Evaluation.} Since \textsc{GracefulRL} is trained to signal abstention with a literal \texttt{\textbackslash boxed\{\allowbreak\textbackslash text\{\allowbreak I don't know\}\}}, its abstention is detected by exact match, removing the need for an LLM-as-a-judge as done in the original evaluation by the authors. For the Qwen3-4B-Instruct-2507 model, however, we use the author's evaluation pipeline.

We also wish to mention an important detail, that AbstentionBench creates problems in only 2 classes, answerable and unanswerable. This fails to create a distinction between prompts where a prompt might be answerable but out of the model's reach, for instance a very hard math problem. Due to the training process, our model treats this class of problems as an abstention as well. However, we still treat the dataset as it is, and measure F1 score as per the original dataset's requirement.

\clearpage
\section{Minesweeper Environment and Experimental Protocol}
\label{app:minesweeper}

\begin{table}[H]
\centering
\small
\begin{tabularx}{\textwidth}{@{}lX@{}}
\toprule
Setting & Value \\
\midrule
Fresh rollout batch & $32$ puzzle prompts $\times$ $16$ episodes $=512$ episodes; one optimizer update per batch \\
Sampling during training & Temperature $1.0$, top-$p$ $1.0$; at most $4{,}096$ generated tokens per turn and $32{,}000$ tokens of context \\
Policy loss & Per-token loss; lower/upper clipping ratios $0.20/0.28$; entropy coefficient $0$ \\
Optimizer & Adam; constant learning rate $10^{-6}$; weight decay $0.1$; $(\beta_1,\beta_2)=(0.9,0.98)$ \\
Compute layout & One eight-H100 node: four actor GPUs (tensor parallelism $2$, data parallelism $2$) and four rollout GPUs (two tensor-parallel-$2$ SGLang engines); CPU optimizer offload \\
\bottomrule
\end{tabularx}
\caption{Shared training settings for the four Minesweeper $c$-sweep runs.}
\label{tab:minesweeper-training}
\end{table}

\subsection{Game, observations, and outcomes}
\label{app:minesweeper-environment}

Each puzzle is a $6\times6$ Minesweeper board with six mines and 30 safe cells. Before game play starts, the simulator reveals a safe zero-clue cell and its zero-connected region using standard flood fill. The model then receives the visible board as a Python list, with hidden cells shown as \texttt{\#} and opened cells as integer counts of adjacent mines. The only action available to the model is \texttt{reveal(row, col)}. Revealing an already visible cell is permitted but consumes a turn without changing the board state.

An episode is a \textsc{Success} when all 30 safe cells are visible. Revealing a mine, unparseable action, or out-of-bounds action terminates the episode with an \textsc{Error}. Reaching the turn limit without solving or making one of these terminal errors is counted as a \textsc{Fallback}. Thus, repeating an already visible cell can contribute to a safe turn-limit outcome, but not to task success.

\subsection{Puzzle generation and data splits}
\label{app:minesweeper-data}

\begin{itemize}
    \item A puzzle is initialized by a random seed. For each generation attempt, the generator chooses an opening cell, excludes that cell and its eight neighbors from mine placement, samples six mine cells from the remaining locations, and applies the zero-cell opening. 
    \item A deterministic constraint solver must then be able to finish without guessing: it deduces safe cells from adjacent-number constraints and reveals the lexicographically first deduced-safe cell at each step. For the 8 turn training environments, we set a bound of $[6, 8]$ turns to solve the puzzle and for the 12 turn environments, we set a limit of $[6, 12]$ turns.
    \item  We further filter this set to select non-trivial boards by comparing two non-guessing policies. The \textit{naive} policy uniformly samples among \textbf{provably} safe cells, whereas the \textit{heuristic} policy prioritizes provably-safe cells that are \textbf{likely to flood-fill adjacent zero-cell regions} or unexplored areas. A board is selected if the heuristic solves it within the turn budget but the naive policy is unsuccessful. Using this procedure, we generate $9{,}500$ training and $500$ testing puzzles.
    
\end{itemize}

\subsection{On-policy training and checkpoint selection}
\label{app:minesweeper-training}

For training, we use the Miles \citep{miles2026} codebase and use the hyperparameters mentioned in Table \ref{tab:minesweeper-training}.

\clearpage
\subsection{Model prompt and interaction format}
\label{app:minesweeper-prompt}

Each episode begins with a system message and a user message. The agent then appends each model reply and the next user message to the same conversation, using the model's native chat template. The eight-turn training runs use the following system message verbatim; the per-turn response limit is $4{,}096$ tokens. There are no example moves in the prompt.

\begin{minesweeperpromptbox}{System message}
You are playing Minesweeper on a 6 by 6 board that contains 6 mines.

Your only valid action is reveal(row, col). The board is given as a Python list of
lists named board, and board[row][col] is the cell you reveal with reveal(row, col).
Indices start at 0: board[0] is the top row, board[5] is the bottom row,
board[row][0] is the leftmost cell of a row, and board[row][5] is the
rightmost. There is no row 6 and no column 6.

A "#" entry is an unrevealed cell. An integer is the number of mines in its
neighboring cells. The initial reveal has already been made and zero cells use normal flood fill.
You have at most 8 turns to reveal every non-mine cell. Each turn you cannot spend more than 4096 tokens,
reasoning included: a reply cut off at that limit loses the episode.

Keep your reasoning brief. The last line of your reply must be exactly one action,
in this format:
reveal(row, col)

Only that last line is read, so nothing may follow it. There are no second chances:
a reply with no action on its final line or a cell outside the board loses the
episode outright, exactly as revealing a mine would. Revealing a cell that is
already visible does not lose the episode, but it uncovers nothing and still
uses up one of your turns.
\end{minesweeperpromptbox}

The user message supplies the current board. For example, the first message for training seed $12438$ is:

\begin{minesweeperpromptbox}{Initial user message}
Turns remaining: 8

board = [
    ["#", "#", "#", "#", "#", "#"],
    ["#", 2, 1, 2, 3, "#"],
    ["#", 2, 0, 0, 1, 1],
    ["#", 2, 1, 1, 0, 0],
    ["#", "#", "#", 1, 0, 0],
    ["#", "#", "#", 1, 0, 0],
]

You must uncover all the safe cells before time runs out. Answer with the 0-indexed row and column, so reveal(row, col) reveals board[row][col]; rows and columns each run from 0 to 5.
\end{minesweeperpromptbox}

Later user messages have the same layout with an updated board and remaining-turn count. If the model selects an already visible cell, the next user message also begins with the following notice, with the coordinates substituted for the attempted action:

\begin{minesweeperpromptbox}{Repeated-reveal notice}
Your action reveal(2, 3) named board[2][3], which was already revealed: nothing was uncovered and that turn is spent. Pick an unrevealed ("#") cell.
\end{minesweeperpromptbox}

\end{document}